\newtheorem{theorem}{Theorem}
\newtheorem{definition}{Definition}
\newtheorem{assumption}{Assumption}
\title{Learning with Dual-level Noisy Correspondence for Multi-modal Entity Alignment}
\author{%
  Haobin Li$^{1}$, Yijie Lin$^{1}$, Peng Hu$^{1}$, Mouxing Yang$^{1}$\thanks{Corresponding Authors.}, Xi Peng$^{1,2*}$ \\
  $^{1}$College of Computer Science, Sichuan University, Chengdu, China\\
  $^{2}$National Key Laboratory of Fundamental Algorithms and Models\\ for Engineering Numerical Simulation, Sichuan University, Chengdu, China\\
  \texttt{\{haobinli.gm, linyijie.gm, penghu.ml, yangmouxing, pengx.gm\}@gmail.com}
}
\begin{document}

\maketitle

\vspace{-3mm}

\begin{abstract}
Multi-modal entity alignment (MMEA) aims to identify equivalent entities across heterogeneous multi-modal knowledge graphs (MMKGs), where each entity is described by attributes from various modalities. 
Existing methods typically assume that both intra-entity and inter-graph correspondences are faultless, which is often violated in real-world MMKGs due to the reliance on expert annotations.
In this paper, we reveal and study a highly practical yet under-explored problem in MMEA, termed Dual-level Noisy Correspondence (DNC).
DNC refers to misalignments in both intra-entity (entity-attribute) and inter-graph (entity-entity and attribute-attribute) correspondences.
To address the DNC problem, we propose a robust MMEA framework termed RULE. RULE first estimates the reliability of both intra-entity and inter-graph correspondences via a dedicated two-fold principle. Leveraging the estimated reliabilities, RULE mitigates the negative impact of intra-entity noise during attribute fusion and prevents overfitting to noisy inter-graph correspondences during inter-graph discrepancy elimination. Beyond the training-time designs, RULE further incorporates a correspondence reasoning module that uncovers the underlying attribute-attribute connection across graphs, guaranteeing more accurate equivalent entity identification.
Extensive experiments on five benchmarks verify the effectiveness of our method against the DNC compared with seven state-of-the-art methods. 
The code is available at \href{https://github.com/XLearning-SCU/RULE}{XLearning-SCU/RULE}
\end{abstract}

\vspace{-3mm}

\section{Introduction}
Multi-Modal Entity Alignment~\citep{EVA,ACK-MMEA} (MMEA) aims to identify equivalent entities across different Multi-modal Knowledge Graphs (MMKGs)~\citep{mmkg1,mmkg2}, where each entity is associated with attributes of various modalities (\textit{e.g.}, structural triples and images).
Due to the heterogeneity of attributes from different modalities and graphs from different sources (\textit{e.g.}, Wikidata~\citep{wiki} and YAGO~\citep{yago}), the key challenge of MMEA is to learn a comprehensive representation for each entity with its respective attributes while eliminating the cross-graph discrepancy.
To this end, existing methods usually conduct multi-modal fusion for attributes within the same entity based on the intra-entity correspondences (\textit{i.e.}, entity-attribute pairs), while performing cross-graph alignment by resorting to the inter-graph correspondences (\textit{i.e.}, entity-entity pairs and attribute-attribute pairs).

Despite significant efforts in intra-entity attribute fusion~\citep{MEAformer,PMF} and inter-graph discrepancy elimination~\citep{XGEA,HMEA}, existing MMEA methods heavily rely on the assumption of faultless intra-entity and inter-graph correspondences.
However, as shown in Fig.~\ref{fig: fig1}(a), the assumption is daunting and even impossible to satisfy, leading to the Noisy Correspondence (NC) problem at dual levels.
On the one hand, as the MMKG construction requires expert knowledge, it is inevitable to wrongly associate some entities with irrelevant attributes, resulting in intra-entity NC.
For instance, image of ``Elvis Tsui'' is incorrectly associated with entity ``Jason Momoa'' because of the visual resemblance. 
On the other hand, due to the inherent complexities in attribute and entity association, accurately associating all the inter-graph entities and their corresponding attributes is impractical, leading to inter-graph NC.
For example, movie entity ``Mr. \& Mrs. Smith'' is mistakenly labeled with real-life couple ``Will Smith and Mrs. Smith''.
According to the statistics in Appendix~\ref{sec: Noise Statistics in Real-World Benchmarks}, real-world benchmarks always contain numerous NC (\textit{e.g.}, over 50\% in ICEWS benchmarks).
As shown in Fig.~\ref{fig: fig1}(b), NC would not only undermine the fusion of within-entity attributes but also misleading the inter-graph alignment, both of which significantly degrade the performance.
% Thus, it is necessary to tackle such a dual-level noisy correspondence problem for the MMEA task.

\begin{figure}[t]
    \centering
    \includegraphics[width=0.9\linewidth]{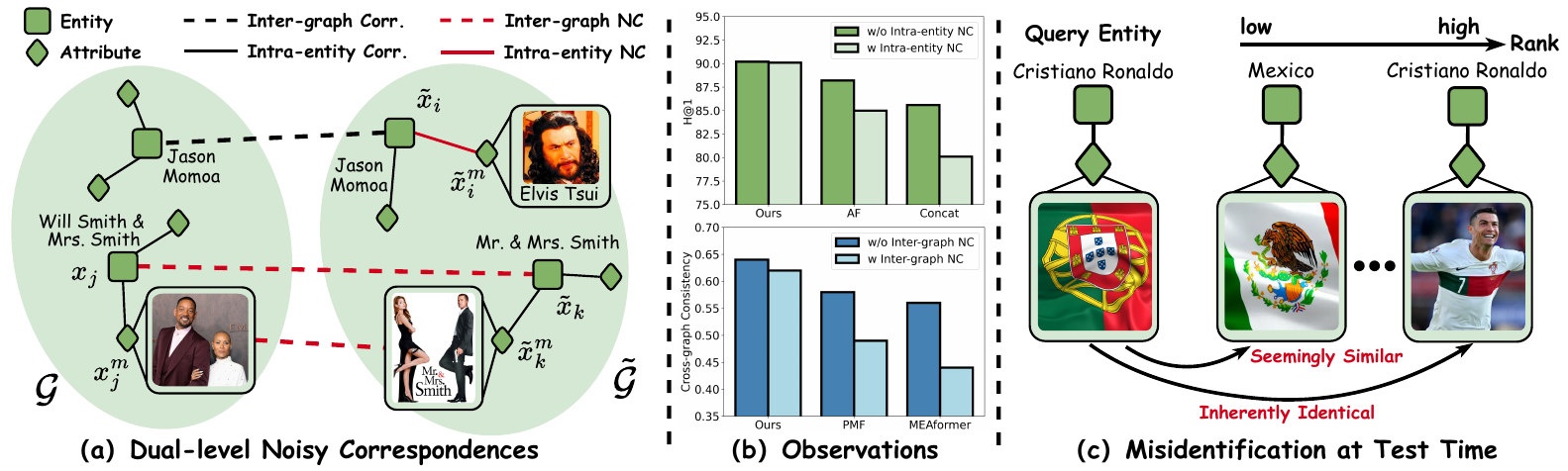}
    \caption{(a) \textbf{Dual-level Noisy Correspondence} occurs in both intra-entity level (\textit{i.e.}, entity-attribute pairs such as ($\tilde{x}_i,\tilde{x}_i^m$)) and the inter-graph level (\textit{i.e.}, entity-entity ($x_{j},\tilde{x}_{k}$) or attribute-attribute pairs ($x_{j}^{m},\tilde{x}_{k}^{m}$)).
    (b) \textbf{Observations}: 
    On the one hand, both vanilla adaptive fusion (AF) and concatenation (Concat) tend to integrate erroneous attributes and thus degrade performance, while our method achieves reliable fusion against inter-graph NC.
    On the other hand, existing methods suffer in cross-graph alignment when encountering inter-graph NC, whereas our method boosts performance by mitigating the negative impact of intra-entity NC.
    (c) \textbf{Misidentification at Test Time}: Seemingly similar attribute pairs may prevent the query entity from being associated with its equivalent entity.
    For instance, the implicit connection between the football player ``Cristiano Ronaldo'' and his home country are often overlooked, resulting in misidentification during entity alignment. }
    \label{fig: fig1}
    % \vspace{-10pt}
\end{figure}

Based on the above observations, we reveal a new problem for MMEA, termed Dual-level Noisy Correspondence (DNC).
To conquer the DNC problem, we propose a novel method, dubbed dually RobUst LEarning (RULE), for achieving robust MMEA against DNC.
Specifically, RULE first estimates the reliability of both the intra-entity and inter-graph correspondences by resorting to a dedicatedly-designed two-fold principle and then divides the entity-attribute, entity-entity, and attribute-attribute pairs into different groups.
Based on the estimated reliabilities and division results, RULE alleviates the negative impact of intra-entity NC during intra-entity attribute fusion, while preventing the model from overfitting the inter-graph NC during inter-graph discrepancy elimination.
Beyond the training-time designs, RULE further incorporates a novel correspondence reasoning module to enhance the test-time robustness. 
In brief, this module performs deep reasoning to uncover the underlying attribute-attribute connections across graphs, thus preventing seemingly dissimilar but inherently identical attributes from being neglected (as shown in Fig.~\ref{fig: fig1}(c)) and guaranteeing more accurate equivalent entity identification during inference.

In summary, the major contributions and novelties of this work are given as follows.
\begin{itemize}
[leftmargin=10pt,topsep=0pt,itemsep=0pt]
    \item  We reveal and study a novel and practical problem in MMEA, termed Dual-level Noisy Correspondence (DNC). In brief, DNC refers to the noisy correspondence rooted in the intra-entity (entity-attribute) pairs and inter-graph (entity-entity, attribute-attribute) pairs. We empirically demonstrate that DNC not only undermines multi-modal attribute fusion but also misleads the inter-graph alignment, leading to significant performance degradation for existing MMEA methods.
    \item To achieve robust MMEA against the DNC problem, we propose a novel method termed RULE, which estimates the reliability of
    both the intra-entity and inter-graph correspondences with a dedicatedly-designed two-fold principle and accordingly mitigates the negative impact of DNC during the multi-modal attribute fusion and inter-graph alignment processes. 
    \item During inference, RULE employs a novel correspondence reasoning module to uncover inherently-identical attributes and accordingly achieve more precise cross-graph equivalent entity identification. To the best of our knowledge, this could be one of the first methods to enhance test-time robustness for the MMEA task.
\end{itemize}

\vspace{-3mm}

\section{Method}
In this section, we introduce the proposed RULE for tackling the DNC problem.
In Section~\ref{sec: Problem Formulation}, we present the formal definition of the MMEA task and the DNC problem.
In Section~\ref{sec: Reliability Estimation and Pair Division}, we elaborate on the two-fold principle for the reliability estimation and pair division.  
In Section~\ref{sec: Inter-graph Discrepancy Elimination}-\ref{sec: Intra-entity Attribute Fusion}, we introduce the robust attribute fusion and robust discrepancy elimination modules.
In Section~\ref{sec: test-time reasoning}, we design a test-time correspondence reasoning module to uncover underlying connections between inter-graph attributes, facilitating the equivalent entity identification.

\subsection{Problem Formulation}
\label{sec: Problem Formulation}
Given two heterogeneous multi-modal knowledge graphs (MMKGs), denoted as $\mathcal{G} = \{x_i, \{x_i^m\}_{m=1}^{M}\}_{i=1}^{N}$ and $\tilde{\mathcal{G}} = \{\tilde{x}_j, \{\tilde{x}^m_j\}_{m=1}^{\tilde{M}}\}_{j=1}^{\tilde{N}}$, where $x_i$ and $\tilde{x}_j$ are entities in $\mathcal{G}$ and $\mathcal{\tilde{G}}$, respectively. 
Each entity $x_{i}\in \mathcal{G}$ is associated with $M$ attribute-specific attributes $\{x_i^m\}_{m=1}^{M}$, such as structured triples, textual descriptions, and images.

Within a single graph, the association between an entity and its attributes is captured by entity-attribute pairs $(x_i, x_i^m, h_i^m)$, where $h_i^m \in \{0, 1\}$ is a binary indicator, $h_i^m = 1$ indicates the valid \textit{intra-entity correspondence}, and $h_i^m = 0$ denotes no correspondence between $x_i$ and $x_i^m$.
Across graphs, \textit{inter-graph correspondences} govern the alignment of both the entity-entity pairs and attribute-attribute pairs.
To be specific, the entity-entity pair is represented by $(x_i, \tilde{x}_j, y_{ij})$, where the correspondence $y_{ij} = 1$ if $x_i$ and $\tilde{x}_j$ refer to the same real-world concepts, and $y_{ij} = 0$ otherwise. 
Similarly, the attribute-attribute pair is denoted by $(x_i^m, \tilde{x}_j^m, y_{ij}^m)$, where the correspondence $y_{ij}^m = 1$ \textit{i.f.f.} both attributes are linked to correct entities (\textit{i.e.}, $h_i^m = 1~\&~\tilde{h}_j^m = 1$) and the corresponding entities $x_i$ and $\tilde{x}_j$ are aligned (\textit{i.e.}, $y_{ij} = 1$). 
In other words, once the inter-graph entities are associated, their corresponding attributes could be treated as matched.

Given a query entity $x_i \in \mathcal{G}$, the goal of multi-modal entity alignment is to identify its equivalent entity $\tilde{x}_j$ from the other $\tilde{\mathcal{G}}$ such that $y_{ij} = 1$. 
To this end, existing approaches typically follow a two-stage pipeline:
i) intra-entity attribute fusion: for each entity $x_i$, attribute representations are first extracted using attribute-specific encoders $z_i^m = f^m(x_i^m)$, and then aggregated to form a unified entity representation $z_i$;
ii) inter-graph discrepancy elimination: based on the fused entity representations $z_i$ and $\tilde{z}_j$, contrastive learning~\citep{simclr} is employed to mitigate the inter-graph discrepancy. 
However, in practice, this pipeline assumes that both the intra-entity correspondences (\textit{i.e.}, entity-attribute $h_{i}^{m}$) and inter-graph correspondences (\textit{i.e.}, entity-entity $y_{ij}$ and attribute-attribute $y_{ij}^{m}$) are perfectly labeled. 
However,  due to annotation errors, such an assumption is often violated, leading to the DNC challenge.
As discussed in Introduction, the DNC problem would undermine the inter-graph and intra-entity learning, leading to remarkable performance degradation.

\begin{figure}[t]
    \centering
    \includegraphics[width=0.9\linewidth]{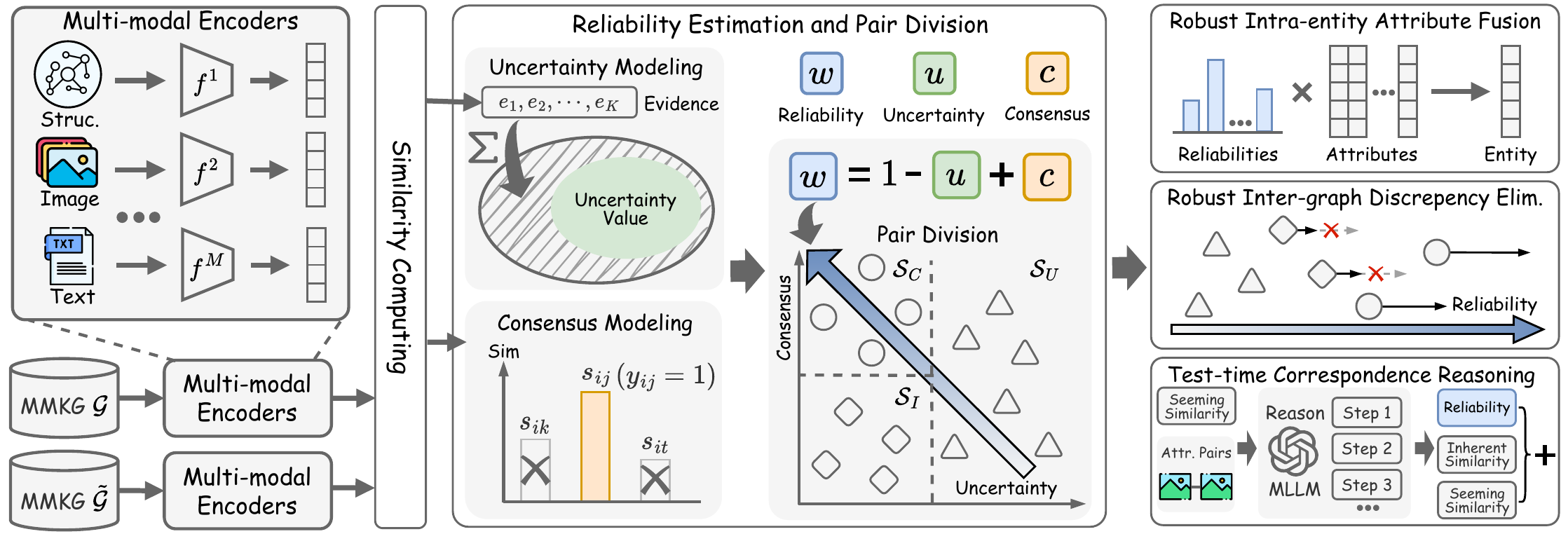}
    \caption{Overview of our method RULE. Given two MMKGs $\mathcal{G}$ and $\tilde{\mathcal{G}}$, RULE first projects the entity attributes into a shared latent space and computes cross-graph attribute similarities. 
    These similarities are used to estimate the reliability of inter-graph correspondences and categorize cross-graph pairs into three subsets: $\mathcal{S}_C$, $\mathcal{S}_I$, and $\mathcal{S}_U$.
    Subsequently, the robust intra-entity attribute fusion module and robust inter-graph discrepancy elimination are employed to mitigate the impact of both intra-entity and inter-graph noisy correspondences.
    Beyond training-time robustness, a test-time correspondence reasoning module uncovers latent attribute-attribute connections across graphs, enabling more accurate equivalent entity identification during inference.}
    \label{fig: overview}
\end{figure}

\subsection{Reliability Estimation and Pair Division}
\label{sec: Reliability Estimation and Pair Division}
To facilitate robust inter-graph discrepancy elimination and intra-entity attribute fusion, we first estimate the reliability of both the intra-entity and inter-graph correspondences by resorting to a two-fold principle, \textit{i.e.}, uncertainty and consensus.
Without loss of generality, in the following, we take the inter-graph entity-entity correspondence as a showcase to elaborate on the process of correspondence reliability estimation. 
For a given entity $x_i$, the reliability $w_i$ between $x_i$ and its associated counterpart $\tilde{x}_j~(y_{ij}=1)$ is estimated using the following principle:
\begin{equation}
    % w_i=\left((1-u_{i})+c_{i}\right)/2,
    w_i=\left(1-u_i\right) \alpha+c_i (1-\alpha),
    \label{eq: reliablity_weights}
\end{equation}
where $\alpha$ is the balanced hyper-parameters (fixed as $0.5$ for simplicity, see Appendix~\ref {appendix: ablation of reliability} for more choices), $u_{i}$ and $c_{i}$ denote the uncertainty and consensus for the correspondence and will be detailed in the following sections.

\subsubsection{Uncertainty Modeling}
For a given entity, uncertainty in this work refers to whether its correspondence is trustworthy or not, which could serve as the principle to identify NC.
According to the Dempster-Shafer Theory~\citep{Dempster–Shafer}, uncertainty could be quantified by evidence, which measures how the data support the association between a query and a candidate.
Specifically, the more evidences the entity accumulates, the lower uncertainty it embraces.
Formally, evidence of the entity pairs $(x_i, \tilde{x}_j)$ is defined as
\begin{equation}
    e_{ij} = \exp\left(\tanh\left(s_{ij}/ \tau\right)\right),
    \label{eq: evi}
\end{equation}
where $s_{ij}=z_i\cdot \tilde{z}_j$ denotes the dot product between the entity representation $z_i$ and $\tilde{z}_j$, $\tau$ is the temperature, and the evidence vector for $x_{i}$ is $\boldsymbol{e}_{i}=[e_{i1};e_{i2};\cdots;e_{i\tilde{N}}]$. 
Following Subjective Logic~\citep{Evidentiallearning}, we associates the evidence vector $\boldsymbol{e}_{i}$ with the parameters of the
Dirichlet distribution $\boldsymbol{\alpha}_{i}=[\alpha_{i1},\alpha_{i2},\cdots,\alpha_{i\tilde{N}}]$, where $\alpha_{ij}=e_{ij}+1$. 
\begin{definition}
    \textbf{Uncertainty.}
    For a given entity $x_{i}$, the uncertainty and the corresponding belief mass are defined as
    \begin{equation}
         u_{i}=\frac{\tilde{N}}{Q_{i}} \ \text{and} \  b_{ij}=\frac{e_{ij}}{Q_{i}}=\frac{\alpha_{ij}-1}{Q_{i}},
        \label{eq: uncertainty}
    \end{equation}
    where $Q_{i}=\sum_{j}^{\tilde{N}}\left(e_{ij}+1\right)=\sum_{j}^{\tilde{N}}\alpha_{ij}$ and $u_{i}+\sum_{j}^{\tilde{N}}b_{ij}=1$.
\end{definition}
The $Q_{i}$ denotes the Dirichlet distribution strength, and the belief mass assignment $\boldsymbol{b}_{i}=[b_{i1};b_{i2};\cdots;b_{i\tilde{N}}]$, \textit{i.e.}, subjective opinion, corresponds to the Dirichlet distribution with parameters $\boldsymbol{\alpha}_{i}$.
% Specifically, the belief mass $b_{ij}$ and the uncertainty $u_i$ are computed as follows,
% \begin{equation}
%     b_{ij}=\frac{e_{ij}}{Q_{i}}=\frac{\alpha_{ij}-1}{Q_{i}} \ \text{and} \ u_{i}=\frac{\tilde{N}}{Q_{i}},
%     \label{eq: mass}
% \end{equation}
% where $Q_{i}=\sum_{j}^{\tilde{N}}\left(\hat{e}_{ij}+1\right)=\sum_{j}^{\tilde{N}}\alpha_{ij}$ and $u_{i}+\sum_{j}^{\tilde{N}}b_{ij}=1$.
% The $Q_{i}$ represents the Dirichlet distribution strength, and the belief mass assignment $\boldsymbol{b}_{i}=[b_{i1};b_{i2};\cdots;b_{i\tilde{N}}]$, \textit{i.e.}, subjective opinion, corresponds to a Dirichlet distribution with parameters $\boldsymbol{\alpha}_{i}$.
Such a formulation encourages the mismatched entity-entity pairs to yield limited evidence, as the given entity fails to associate with any entity in the other MMKG, resulting in high uncertainty.

\subsubsection{Consensus Modeling}
\label{sec: consensus modeling}
Although the formulated uncertainty would help to identify noisy correspondence, we observe that a low uncertainty does not necessarily indicate a correct correspondence. 
Formally,
\begin{theorem}
A low uncertainty $u_i$ does not necessarily imply that the highest belief is assigned to the annotated correspondence $\boldsymbol{y}_{i}$, \textit{i.e.},
\begin{equation}
    z_i \text{ with low } u_i \centernot\Rightarrow \arg\max\, \boldsymbol{b}_{i} = \arg\max \boldsymbol{y}_{i}.
\end{equation}
\label{theorem: wrong uncertainty}
\end{theorem}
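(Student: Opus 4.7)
The plan is to establish the theorem via an explicit counterexample, exploiting the fact that $u_i$ is an aggregate functional of the evidence vector $\boldsymbol{e}_i$ while $\arg\max \boldsymbol{b}_i$ depends only on the location of the largest entry of $\boldsymbol{e}_i$. Because these two quantities depend on essentially orthogonal features of $\boldsymbol{e}_i$, it should be straightforward to decouple them.

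First I would rewrite the uncertainty in a more transparent form. From $Q_i=\sum_{j=1}^{\tilde N}(e_{ij}+1)=\tilde N+\sum_{j=1}^{\tilde N}e_{ij}$, we obtain
\begin{equation}
u_i=\frac{\tilde N}{\tilde N+\sum_{j=1}^{\tilde N}e_{ij}},\qquad b_{ij}=\frac{e_{ij}}{\tilde N+\sum_{j=1}^{\tilde N}e_{ij}}.
\end{equation}
Thus $u_i$ is small exactly when the total evidence mass $\sum_j e_{ij}$ is large, whereas $\arg\max_j b_{ij}=\arg\max_j e_{ij}$ is determined solely by which index achieves the largest evidence. This decomposition already isolates the source of the gap: aggregate confidence and ranking confidence are independent degrees of freedom.

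Next I would construct the counterexample. Let the annotated correspondence satisfy $\arg\max \boldsymbol{y}_i=j_0$, and choose similarities $s_{ij}$ so that $s_{ij^\star}$ for some $j^\star\neq j_0$ saturates $\tanh$ near $+1$, while $s_{ij_0}$ and $s_{ij}$ for other $j$ take moderately positive values whose $\tanh$ values are close to but strictly less than that at $j^\star$. Concretely, picking $s_{ij^\star}/\tau$ large enough and $s_{ij}/\tau$ moderately large for all other $j$ ensures that (i) every $e_{ij}$ is close to $e$, so $\sum_j e_{ij}$ approaches $\tilde N\cdot e$ and hence $u_i$ can be driven arbitrarily close to $1/(1+e)$, its infimum; yet (ii) $e_{ij^\star}>e_{ij_0}$ strictly, so $\arg\max \boldsymbol{b}_i=j^\star\neq j_0=\arg\max \boldsymbol{y}_i$. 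This produces, for any $\tilde N\geq 2$, an instance with $u_i$ arbitrarily close to its minimum while the model's most-believed index disagrees with the annotation.

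Finally I would conclude by observing that the constructed configuration is realizable by the encoders used in RULE, since the map $s_{ij}\mapsto e_{ij}$ is surjective onto $(e^{-1},e)$ and the similarities $s_{ij}=z_i\cdot\tilde z_j$ can attain arbitrary real values in a sufficiently expressive embedding space. I do not anticipate a serious obstacle: the only subtlety is making sure the counterexample genuinely yields a \emph{low} $u_i$ rather than merely a non-trivial one, which is handled by driving all $\tanh(s_{ij}/\tau)$ close to $1$ while keeping a strict ordering among the underlying similarities. This motivates the subsequent introduction of the consensus criterion $c_i$ in Eq.~\eqref{eq: reliablity_weights}, which is designed precisely to penalize the pathological configurations exhibited here.
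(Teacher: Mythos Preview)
Your proof is correct and shares the same core observation with the paper's: $u_i$ depends only on the total evidence $\sum_j e_{ij}$, whereas $\arg\max_j b_{ij}=\arg\max_j e_{ij}$ depends only on the location of the largest entry, so the two can be decoupled. The paper packages this as a proof by contradiction, exhibiting two evidence vectors $\boldsymbol{e}_i^{(1)},\boldsymbol{e}_i^{(2)}$ that each concentrate all their mass at a single (distinct) index $j_1\neq j_2$ while sharing the same total $Q_i$ and hence the same $u_i$; since at most one of $j_1,j_2$ can equal the annotated index, the implication fails. Your construction is different in flavor: rather than zeroing out all but one coordinate, you push \emph{all} $e_{ij}$ close to the saturation value $e$ while maintaining a strict ordering with the maximum at $j^\star\neq j_0$. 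This buys you realizability with respect to the actual evidence map $e_{ij}=\exp(\tanh(s_{ij}/\tau))\in(e^{-1},e)$ of Eq.~(2), a point the paper's proof glosses over (its choice $e_{ij}=0$ is not attainable). Conversely, the paper's two-vector symmetry argument is more concise and makes the decoupling completely transparent without any asymptotic reasoning. Either route suffices; yours is the more careful one.
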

\vspace{-6mm}
The Proof is placed in Appendix~\ref{appendix: theorem}.
Here, $\boldsymbol{y}_{i}=[y_{i1};y_{i2};\cdots;y_{i\tilde{N}}]$ is a one-hot vector indicating the inter-graph entity-entity correspondence of entity $x_i$.
Such a theorem highlights that uncertainty is insufficient to determine whether the belief is concentrated on the annotated correspondence. 
Therefore, we propose the consensus principle as follows.
\begin{definition}
    \textbf{Consensus.}
    For a given entity $x_{i}$, the consensus is defined as
    \begin{equation}
        c_{i}=\max (0, \boldsymbol{s}_{i}\cdot \boldsymbol{y}_{i}),
        \label{eq: cons}
    \end{equation}
    where $\boldsymbol{s}_{i}=[s_{i1},s_{i2},\cdots,s_{iN}]$ denotes the similarity vector, $\max (0,\cdot)$ ensures the consensus is non-negative.
\end{definition}
Intuitively, a low consensus $c_{i}$ indicates that the given correspondence is unreliable, thus serving as another principle to identify noisy correspondence.
However, during inference, the annotated correspondence $\boldsymbol{y}_{i}$ in Eq.~\ref{eq: cons} is unavailable.
To remedy this, we propose to estimate the correct correspondence through a greedy strategy based on marginal contribution.
Here, we begin with a definition of marginal contribution.
\begin{definition}
    For a given entity $x_{i}$, the marginal contribution of its $m$-th attribute is defined as
    \begin{equation}
        \Delta = v(\pi \cup \{m\}) - v(\pi),
        \label{eq: marginal contribution}
    \end{equation}
    where $v(\cdot)$ indicates the value function, $\pi \subseteq \Pi \setminus \{m\}$ denotes a subset $\pi$ of attributes excluding the $m$-th one, $\Pi$ is the complete set of available attributes.
\end{definition}
In the implementation, we define the value function as $v(\pi) = \max \left( \frac{1}{|\pi|} \sum_{j \in \pi} \boldsymbol{s}_{i}^{j} \right)$ and $v(\pi \cup \{m\}) = \max \left( \frac{1}{|\pi| + 1} \sum_{j \in \pi \cup \{m\}} \boldsymbol{s}_{i}^{j} \right)$, where $|\cdot|$ denotes the number of attributes.
Inspired by Shannon’s principle that “the essence of information is to eliminate uncertainty”,  we expect that the informative attributes would contribute to establishing reliable correspondence for the entity-attribute pairs.
Thus,
\begin{assumption}
    For a given entity $x_{i}$, if $x_{i}^{m}$ is correctly associated with $x_{i}$, then $\Delta\geq0$. 
    Conversely, if $x_{i}^{m}$ is irrelevant to $x_{i}$, then $\Delta<0$.
    \label{assumptaion: EA_NC}
\end{assumption}
Assumption~\ref{assumptaion: EA_NC} provides a feasible way to estimate the correct correspondence.
Specifically, incorporating attributes until the marginal contribution no longer improves, and the established subset $\pi$ would help to indicate a reliable correspondence.
To implement this, we adopt the following greedy strategy,
\begin{equation}
    \pi^* = \pi_0 \cup \left\{ m \in (\Pi \setminus \pi_0) \;\middle|\; v(\pi_0 \cup \{m\}) - v(\pi_0) > 0 \right\},
    \label{eq: greedy}
\end{equation}
where $\pi_0$ denotes the initial subset with $|\pi_0| = \left\lfloor \frac{M}{2}+1 \right\rfloor$ when $M \geq 3$. See more details in Appendix~\ref{appendix: DRF}.
With the selected subset $\pi^{*}$, the estimated correspondence is finally given as $\boldsymbol{y}_{i} = \operatorname{one\text{-}hot} ( \arg\max ( \frac{1}{|\pi^*|} \sum_{m \in \pi^*} \boldsymbol{s}_i^m ) )$, where $\operatorname{one-hot}$ denotes the vector conversion.

\subsubsection{Pair Division}
With the formulated uncertainty and consensus, we could further identify the inter-graph NC.
Specifically, we propose to divide the inter-graph pairs with $y_{ij}=1$ into three portions: noisy portion with high uncertainty $\mathcal{S}_{U}=\{x_{i}, \tilde{x}_{j}\mid u_{i}> \beta_u\}$, noisy portion with low consensus $\mathcal{S}_{I}=\{x_{i}, \tilde{x}_{j}\mid u_i \leq \beta_u \text{ and } c_i < \beta_c\}$ and clean portion $\mathcal{S}_{C}=\{x_{i}, \tilde{x}_{j}\mid u_i \leq \beta_u \text{ and } c_i \geq \beta_c\}$.
The thresholds $\beta_u$ and $\beta_c$ are determined in a self-adaptive manner via
\begin{equation}
    \beta_u = \min(u^{TP}, 1 - \beta), \quad \beta_c = \max(\beta, c^{TP}),
    \label{eq: threshold}
\end{equation}
where $u^{TP} = \max_{i \in \mathcal{S}^{\text{TP}}} u_i$, $ c^{TP} = \min_{i \in \mathcal{S}^{\text{TP}}} c_{i}$, and $\beta$ indicates the threshold hyperparameter. Here, $\mathcal{S}^{\text{TP}}= \left\{ i \,\middle|\, \arg\max(\boldsymbol{s}_i) = \arg\max(\boldsymbol{y}_i) \right\}$ denotes the set of true positive pairs.
With the above pair division, the inter-graph pairs could be divided into $\mathcal{S}_{U}$, $\mathcal{S}_{I}$, and $\mathcal{S}_{C}$, which are further used for inter-graph discrepancy elimination.

\subsection{Robust Inter-graph Discrepancy Elimination}
\label{sec: Inter-graph Discrepancy Elimination}
With the established reliability and pair division results, we could obtain three subsets: $\mathcal{S}_{U}$, $\mathcal{S}_{I}$, and $\mathcal{S}_{C}$. 
Since the pairs in $\mathcal{S}_{U}$ exhibit high uncertainty, they are considered unreliable and be excluded from the discrepancy elimination.
As discussed in Section~\ref{sec: consensus modeling}, 
inter-graph pairs with low consensus do not necessarily indicate correct matches, thus the pairs in $\mathcal{S}_{I}$ cannot be regarded as reliable.
Accordingly, we propose a novel Dually Robust Learning (DRL) that employs tailored strategies for the three subsets, thereby achieving robustness against inter-graph noisy correspondence.
Formally, the overall objective is defined as
\begin{equation}
    \mathcal{L}=\mathcal{L}_{DR}+\lambda \mathcal{L}_{Reg},
    \label{eq: overall}
\end{equation}
where $\mathcal{L}_{DR}$ and $\mathcal{L}_{Reg}$ denotes the dually robust loss and regularization loss, $\lambda$ denotes the trade-off parameter. 
Specifically, the dually robust loss and regularization loss are given by,
\begin{equation}
    \mathcal{L}_{DR}\!=\!\mathcal{L}_{DR}(\boldsymbol{\alpha}_{i},\boldsymbol{\hat{y}}_{i}) \!+\! \sum_{m=1}^{M}\mathcal{L}_{DR}(\boldsymbol{\alpha}_{i}^{m},\boldsymbol{\hat{y}}_{i}^{m}),\quad
    \mathcal{L}_{Reg}\!=\!\mathcal{L}_{Reg}(\boldsymbol{\alpha}_{i},\boldsymbol{\hat{y}}_{i}) \!+\! \sum_{m=1}^{M}\mathcal{L}_{Reg}(\boldsymbol{\alpha}_{i}^{m},\boldsymbol{\hat{y}}_{i}^{m}),
\end{equation}
where $\boldsymbol{\alpha}_{i}^{m}$ and $\boldsymbol{\hat{y}}_{i}^{m}$ are the Dirichlet parameter and refined correspondence for $x_i^{m}$. 
More specifically, for the given entity $x_{i}$, the dually robust loss is defined as
\begin{equation}
\mathcal{L}_{DR}(\boldsymbol{\alpha}_{i},\boldsymbol{\hat{y}}_{i})= \mathbb{I}\left(i\notin \mathcal{S}_{U}\right)\int\left\|\boldsymbol{\hat{y}}_i-\boldsymbol{p}_i\right\|_2^2 \ D(\boldsymbol{p}_{i} \mid \boldsymbol{\alpha}_{i}) \ d \mathbf{p}_i,
\label{eq: dually robust loss}
\end{equation}
where $D(\boldsymbol{p}_{i} \mid \boldsymbol{\alpha}_{i})$ denotes the density function of the Dirichlet distribution over the query probability $\boldsymbol{p}_{i}=[p_{i1},p_{i2},\cdots,p_{i\tilde{N}}]$, and $\mathbb{I}(\cdot)$ indicates an indicator function evaluating to 1 \textit{i.f.f} the condition is satisfied.
The refined correspondence $\boldsymbol{\hat{y}}_{i}$ is defined as follows,
\begin{equation}
    \boldsymbol{\hat{y}}_i =
        \begin{cases}
        \boldsymbol{y}_i, & \text{if } i\in \mathcal{S}_{C} \\
        c_{i} \boldsymbol{y}_i + (1 - c_{i}) \operatorname{Softmax}(\boldsymbol{s}_{i}), & \text{if } i\in \mathcal{S}_{I}
        \end{cases}.
    \label{eq: ref label}
\end{equation}
Such behavior enhances robustness against inter-graph noisy correspondences for the following reasons.
On the one hand, the upper bound of query probability is proportional to $Q_i$ (Theorem~\ref{pro: uncertainty upper bound}), thus preventing over-optimization when the accumulate $Q_i$ is limited. 
On the other hand, excluding high-uncertainty correspondences in $\mathcal{S}_{U}$ and refining the low-consensus correspondences in $\mathcal{S}_{I}$ would prevent erroneous optimization caused by NC.

Although the proposed dually robust loss in Eq.~\ref{eq: dually robust loss} could encourage higher evidence for inter-graph pairs with reliable correspondence, it is unable to guarantee that unassociated inter-graph pairs generate limited evidence. 
To achieve this, a Kullback-Leibler (KL) divergence term is adopted to penalize the evidence of the unassociated inter-graph pairs, \textit{i.e.},
\begin{equation}
    \mathcal{L}_{\mathrm{Reg}}(\boldsymbol{\alpha}_{i},\boldsymbol{\hat{y}}_{i}) = \mathrm{KL}\left[D\left(\boldsymbol{p}_i \mid \boldsymbol{\tilde{\alpha}}_i\right) \,\|\, D\left(\boldsymbol{p}_i \mid \mathbf{1}\right)\right]
    \label{eq: kl}
\end{equation}
where $\mathbf{1}\in \mathbb{R}^{\tilde{N}}$ is a $\tilde{N}$-dimensional vector of ones, $\boldsymbol{\tilde{\alpha}}_{i}=\boldsymbol{\hat{y}}_{i}+(1-\boldsymbol{\hat{y}}_{i})\odot \boldsymbol{\alpha}_{i}$ denotes the Dirichlet parameters which help to penalize the evidence of unassociated correspondence, $\Gamma(\cdot)$ and $\psi(\cdot)$ are the gamma and digamma function, respectively. 

\subsection{Robust Intra-entity Attribute Fusion}
\label{sec: Intra-entity Attribute Fusion}
As discussed in Section~\ref{sec: Problem Formulation}, inter-graph attribute associations emerge as the by-product of establishing entity-attribute and entity-entity correspondences.
% the inter-graph attributes are associated based on both entity-attribute and entity-entity correspondences.
Therefore, for correctly paired entities, the attribute-attribute correspondence is incorrect, \textit{i.f.f}, the corresponding entity-attribute correspondence is wrongly established.
Thus, the inter-graph reliability $w_i^m$ could be employed to identify unreliable intra-entity attributes and weaken the emphasis on them during attribute fusion.
Specifically, for a given entity $x_{i}$, we employ the following Dually Robust Fusion (DRF) module to obtain the integrated representation,
\begin{equation}
    z_i = \oplus_{m \in M} \left( w_i^m \cdot z_i^m \right),
\end{equation}
where $\oplus$ indicates the concatenation operator.
Such behavior achieves robustness against noisy entity-attribute pairs by fusing the multi-modal attributes with adaptive weights. In other words, attributes with higher reliability are emphasized, while those with lower reliability are weakened.

\subsection{Test-time Correspondence Reasoning}
\label{sec: test-time reasoning}

\begin{table*}[t]
\centering
\caption{Comparisons with state-of-the-art methods on Non-name benchmarks under DNC setting. ``Inherent DNC'' refers to the setting without any additional injected noise. H@$k$ indicates the top-$k$ retrieval accuracy while MRR denotes the mean reciprocal rank. The best and second best results are marked in \textbf{bold} and \underline{underline}.
}
\label{tab: main table visual}
    \resizebox{0.95\linewidth}{!}
    {  
    \centering
    \begin{tabular}{c|l|ccc|ccc|ccc|ccc|ccc|c}
        \toprule
        \multirow{2}{*}{Setting} & \multirow{2}{*}{Method} & \multicolumn{3}{c|}{ICEWS-WIKI} & \multicolumn{3}{c|}{ICEWS-YAGO} & \multicolumn{3}{c|}{DBP15K ZH-EN} & \multicolumn{3}{c|}{DBP15K JA-EN} & \multicolumn{3}{c|}{DBP15K FR-EN} & \multirow{2}{*}{\makecell{Avg.\\H@1}} \\
         & & H@1 & H@5 & MRR & H@1 & H@5 & MRR & H@1 & H@5 & MRR & H@1 & H@5 & MRR & H@1 & H@5 & MRR & \\
        \midrule
        \multirow{8}{*}{\parbox[c][1cm][c]{1cm}{\centering Inherent \\ DNC}} 
        & EVA           & 29.6 & 40.7 & 35.1 & 8.0  & 13.7 & 11.1 & 70.7 & 86.8 & 77.9 & 73.6 & 89.5 & 80.6 & 74.3 & 90.5 & 81.4 & 51.2 \\
        & MCLEA         & 43.2 & 63.1 & 52.4 & 30.1 & 47.7 & 38.8 & 76.6 & 90.8 & 83.0 & 77.8 & 92.0 & 84.1 & 78.7 & 92.7 & 84.9 & 61.3 \\
        & XGEA          & 49.8 & 61.5 & 55.5 & 35.5 & 46.7 & 41.2 & 81.1 & 93.0 & 86.3 & 82.6 & 94.3 & 87.8 & 83.1 & 94.7 & 88.3 & 66.4 \\
        & MEAformer     & \underline{53.5} & \underline{70.1} & \underline{61.3} & 35.0 & 51.2 & 42.8 & 82.4 & 93.5 & 87.3 & 81.9 & 94.2 & 87.3 & 82.1 & 94.4 & 87.5 & 67.0 \\
        & UMAEA         & 51.2 & 70.0 & 59.9 & 32.4 & 49.4 & 40.6 & 79.1 & 93.2 & 85.3 & 79.6 & 93.9 & 85.8 & 81.2 & 95.0 & 87.3 & 64.7 \\
        & PMF           & 52.6 & 67.9 & 59.9 & \underline{38.3} & \underline{53.2} & \underline{45.4} & \underline{83.9} & \underline{94.6} & \underline{88.9} & \underline{83.9} & \underline{94.9} & \underline{89.0} & \underline{84.4} & \underline{95.3} & \underline{89.6} & \underline{68.6} \\
        & HHEA   & 49.0 & 64.6 & 56.4 & 37.5 & 50.4 & 43.8 & 48.7 & 62.5 & 55.5 & 49.9 & 60.6 & 55.4 & 52.8 & 63.6 & 58.2 & 47.6 \\
        & Ours          & \textbf{64.2} & \textbf{76.7} & \textbf{70.0} & \textbf{48.8} & \textbf{60.5} & \textbf{54.6} & \textbf{85.6} & \textbf{94.8} & \textbf{89.7} & \textbf{85.2} & \textbf{95.4} & \textbf{89.6} & \textbf{85.1} & \textbf{95.4} & \textbf{89.6} & \textbf{73.8} \\
        \midrule
        \multirow{8}{*}{\parbox[c][1cm][c]{1cm}{\centering 20\% \\ DNC}}
        & EVA           & 15.2 & 21.6 & 18.4 & 0.2  & 0.4  & 0.4  & 51.0 & 70.2 & 59.7 & 54.5 & 73.4 & 63.1 & 53.4 & 73.8 & 62.6 & 34.9 \\
        & MCLEA         & 34.5 & 53.6 & 43.5 & 24.6 & 40.4 & 32.5 & 69.9 & 85.7 & 77.0 & 70.1 & 85.6 & 77.2 & 70.7 & 87.3 & 78.1 & 54.0 \\
        & XGEA          & 40.4 & 48.4 & 44.6 & 22.6 & 27.6 & 25.7 & 76.3 & \underline{90.7} & 82.7 & 76.6 & \underline{91.1} & 83.0 & 76.9 & 91.2 & 83.7 & 58.6 \\
        & MEAformer     & \underline{50.8} & \underline{67.5} & \underline{58.4} & 35.9 & \underline{50.7} & 43.0 & \underline{77.7} & 90.6 & \underline{83.4} & \underline{77.8} & 90.9 & \underline{83.6} & \underline{78.0} & \underline{91.5} & \underline{84.0} & \underline{64.0} \\
        & UMAEA         & 48.4 & 64.6 & 56.1 & 31.1 & 46.5 & 38.6 & 74.5 & 89.6 & 81.3 & 73.6 & 89.4 & 80.7 & 74.3 & 89.9 & 81.1 & 60.4 \\
        & PMF           & 45.4 & 60.6 & 52.6 & 36.2 & 49.9 & 42.7 & 76.7 & 90.2 & 82.7 & 76.5 & 89.9 & 82.5 & 77.1 & 90.7 & 83.2 & 62.4 \\
        & HHEA   & 47.8 & 61.8 & 54.4 & \underline{37.4} & 49.5 & \underline{43.3} & 48.7 & 58.8 & 53.8 & 49.0 & 58.7 & 54.0 & 52.5 & 61.7 & 57.1 & 47.1 \\
        & Ours          & \textbf{62.4} & \textbf{75.1} & \textbf{68.5} & \textbf{48.3} & \textbf{59.5} & \textbf{53.9} & \textbf{81.1} & \textbf{92.0} & \textbf{86.0} & \textbf{80.5} & \textbf{92.2} & \textbf{85.6} & \textbf{80.5} & \textbf{92.2} & \textbf{85.8} & \textbf{70.6} \\
        \midrule
        \multirow{8}{*}{\parbox[c][1cm][c]{1cm}{\centering 50\% \\ DNC}}
        & EVA           & 0.5  & 0.8  & 0.9  & 0.0  & 0.1  & 0.2  & 17.2 & 30.5 & 23.6 & 18.3 & 32.0 & 24.8 & 14.0 & 27.2 & 20.3 & 10.0 \\
        & MCLEA         & 24.5 & 39.9 & 31.9 & 17.4 & 31.1 & 24.1 & 55.2 & 72.1 & 63.1 & 54.0 & 70.4 & 61.5 & 54.6 & 70.9 & 62.0 & 41.1 \\
        & XGEA          & 39.5 & 47.0 & 43.4 & 23.7 & 27.8 & 26.3 & 67.9 & 83.6 & 74.9 & \underline{68.0} & \underline{83.8} & \underline{75.0} & \underline{68.0} & \underline{83.9} & \underline{75.1} & 53.4 \\
        & MEAformer     & 42.4 & \underline{58.8} & 50.1 & 30.6 & 45.0 & 37.5 & \underline{68.1} & \underline{83.7} & \underline{75.1} & 62.9 & 80.3 & 70.8 & 65.8 & 82.6 & 73.4 & \underline{54.0} \\
        & UMAEA         & 37.8 & 55.0 & 46.0 & 25.4 & 40.0 & 32.5 & 64.8 & 82.1 & 72.7 & 58.1 & 78.5 & 67.2 & 61.8 & 80.9 & 70.3 & 49.6 \\
        & PMF           & 35.1 & 48.8 & 41.8 & 29.6 & 42.4 & 35.8 & 67.1 & 82.6 & 74.2 & 65.6 & 80.7 & 72.5 & 66.1 & 81.5 & 73.1 & 52.7 \\
        & HHEA   & \underline{43.9} & 57.7 & \underline{50.4} & \underline{34.3} & \underline{46.2} & \underline{40.2} & 45.5 & 55.2 & 50.3 & 46.4 & 55.4 & 51.2 & 50.1 & 59.1 & 54.7 & 44.1 \\
        & Ours          & \textbf{58.2} & \textbf{69.7} & \textbf{63.6} & \textbf{46.9} & \textbf{57.4} & \textbf{52.0} & \textbf{73.4} & \textbf{85.9} & \textbf{79.2} & \textbf{71.8} & \textbf{84.9} & \textbf{77.8} & \textbf{71.4} & \textbf{84.8} & \textbf{77.5} & \textbf{64.3} \\
        \bottomrule
    \end{tabular}
    }
\end{table*}

As discussed in the Introduction, the seemingly similar attributes might hinder the identification of equivalent entities.
To solve the problem, we propose Test-time correspondence Reasoning (TTR) module, which uncovers the underlying attribute-attribute connections across graphs, thus improving the equivalent entity identification during inference. 
Specifically, the refined entity-entity similarity scores are given by,
\begin{equation}
    \boldsymbol{\hat{s}}_{i} = \sum_{m\in M}\hat{w}_{i}^{m}\cdot \boldsymbol{\hat{s}}_{i}^{m},
    \label{eq: ttr fusion}
\end{equation}
where $\boldsymbol{\hat{s}}_{i}^{m}$ represents the similarity scores of the $m$-th attribute output by the MLLM and $\hat{w}_{i}^{m}$ denotes the corresponding reliability weight.
Such behaviour could mitigate the negative impact of intra-entity NC, which might undermine attribute fusion during test time.
More specifically, we employ Chain-of-Thought (CoT) to guide the MLLM toward step-by-step reasoning. Mathematically, 
\begin{equation}
    \boldsymbol{\hat{s}}_{i}^{m} = \operatorname{Softmax} \left( \oplus_{j \in \mathcal{T}_i^m} \left( \operatorname{CoT}\left[ x_i^{m}, \tilde{x}_{j}^{m}, \boldsymbol{s}_i^m \right] \right) \right),
    \label{eq: mllm reasoning}
\end{equation}
where $\mathcal{T}_{i}^{m}$ denotes the set of correspondences with the highest similarity in prior results $\boldsymbol{s}_{i}^{m}$, $\operatorname{CoT}$ indicates the reasoning process.
Although a feasible solution is to prompt the MLLM with simple instructions such as ``Identify the similarities between these attributes.'', such vanilla prompts fail to fully activate the deep reasoning capabilities of MLLM.
In contrast, the proposed CoT-based reasoning would enable the MLLM to leverage prior results and detailed steps for reasoning, preventing deviations from the prior knowledge while facilitating the mining of underlying connections.
See Appendix~\ref{appendix: TTR} and Appendix~\ref{appenidx: ttr_cases} for more details.
Finally, the joint similarity score could be derived as $\boldsymbol{s}_{i}^{joint}=\boldsymbol{s}_{i} + \boldsymbol{\hat{s}}_{i}$ and the identified equivalent entity is given by $\arg \max \boldsymbol{s}_{i}^{joint}$.

\section{Experiments}
In this section, we conduct experiments on five widely-used MMEA datasets to validate the effectiveness of RULE. 
Due to space limitation, we present more experiments in Appendix~\ref{appendix: more experiments}.

\subsection{Implementation Details and Experimental Settings}
Our method contains two networks, the attribute-specific encoders $f^m$ and the test-time correspondence reasoning module. 
Specifically, we first utilize a pre-trained CLIP model~\citep{CLIP} to extract features from visual and textual attributes.
After that, we employ the attribute-specific encoders to obtain the latent embeddings following~\citep{PMF,XGEA}.
% Note that, all the experiments are conducted on the same visual and textual features for fair comparison.
For the test-time correspondence module, we use Qwen2.5-VL-72B-Instruct~\citep{QWEN2.5} as default to facilitate the test-time correspondence reasoning module (Section~\ref{sec: test-time reasoning}).
Regardings hyperparameters, we set the trade-off parameter $\lambda$ in Eq.~\ref{eq: overall}, the threshold $\beta$ in Eq.~\ref{eq: threshold} are fixed as $1e^{-4}$, $0.3$ for all the experiments, respectively. The temperature $\tau$ in Eq.~\ref{eq: evi} is set to $0.07$ following~\citep{simclr}.

We evaluate our method on five benchmark datasets: ICEWS-WIKI~\citep{SimpleHHEA}, ICEWS-YAGO, DBP15K$_{\text{ZH-EN}}$~\citep{EVA}, DBP15K$_{\text{JA-EN}}$, and DBP15K$_{\text{FR-EN}}$. Details of the dataset and evaluation metric are provided in Appendix~\ref{appedix: datasets} and~\ref{appendix: metric}.
As discuss in Introduction, the MMEA benchmarks including ICEWS always contaminated by DNC which denoted as ``Inherent DNC" in the paper.
To further evaluate the robustness toward DNC, we manually inject noise to conduct more comprehensive evaluations by following the widely-adopted strategies in the noisy correspondence/label learning community~\citep{noisylabel,NCR}.
Specifically, the artificial noise are injected in the following three aspects:
i) \textit{entity-entity NC}: one entity in an aligned entity pair is randomly replaced with a different entity;
ii) \textit{entity-attribute NC}: a visual or textual attribute is randomly reassigned to a different entity;
iii) \textit{attribute-attribute NC}: 
visual attributes are perturbed with Gaussian noise, while textual attributes are corrupted via random character replacements.
The artificial noise levels are set as $20\%$ and $50\%$ in our experiments, which represents the proportion of corrupted E-E/E-A/A-A pairs.
% It is worth noting that the TTR module and the E-A, A-A NC are involved in the image and name attributes, as these attributes are raw data present in the benchmarks.

\subsection{Comparisons with State-Of-The-Arts}
In this section, we compare our method RULE with seven state-of-the-art MMEA methods under the Dual-level Noisy Correspondence setting, including EVA~\citep{EVA}, MCLEA~\citep{MCLEA}, XGEA~\citep{XGEA}, MEAformer~\citep{MEAformer}, UMAEA~\citep{UMAEA}, PMF~\citep{PMF}, and HHEA~\citep{SimpleHHEA}. 
Following~\citep{MEAformer,PMF,XGEA}, we conduct experiments under two widely-adopted evaluation protocols: \textit{Non-name setting} denotes all attributes except for the entity name are used, while \textit{All-attributes setting} includes all available modalities.
For fair comparisons, we adopt the same backbone (\textit{i.e.}, CLIP) for all baselines and our method.
For more results on different backbones, please refer to Appendix~\ref{appendix: various_backbones}.

\begin{figure}[t]
    \centering
    \includegraphics[width=1.0\linewidth]{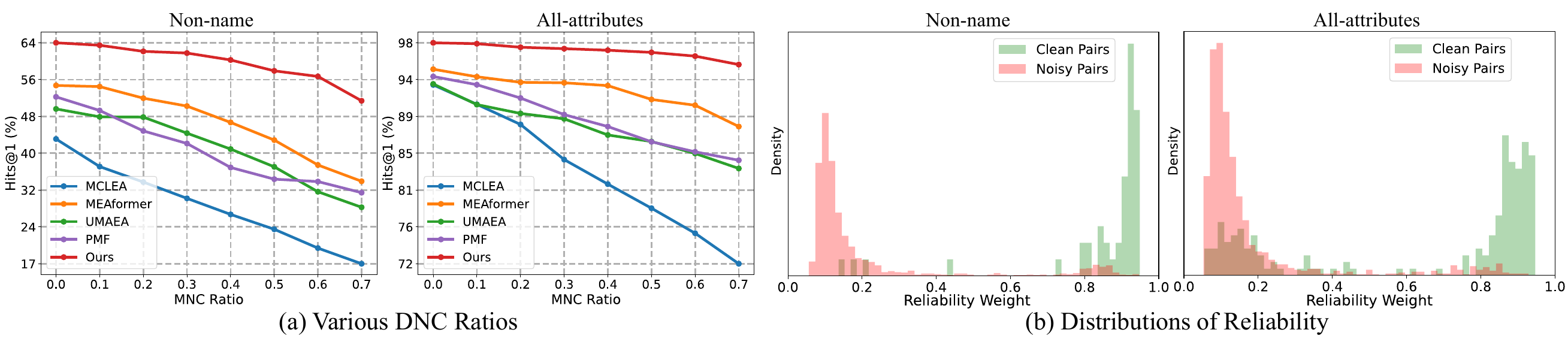}
    \caption{Analytic studies of various DNC ratios and reliability in Eq.~\ref{eq: reliablity_weights}.}
    \label{fig: finer-grained ablation}
\end{figure}

\begin{table*}[t]
    \centering
    \caption{Comparisons with state-of-the-art methods on All-attributes benchmarks under DNC setting. }
    \label{tab: main table textual}
    \resizebox{0.95\linewidth}{!}{  
    \begin{tabular}{c|l|ccc|ccc|ccc|ccc|ccc|c}
        \toprule
        \multirow{2}{*}{Setting} & \multirow{2}{*}{Method} & \multicolumn{3}{c|}{ICEWS-WIKI} & \multicolumn{3}{c|}{ICEWS-YAGO} & \multicolumn{3}{c|}{DBP15K$_{\text{ZH-EN}}$} & \multicolumn{3}{c|}{DBP15K$_{\text{JA-EN}}$} & \multicolumn{3}{c}{DBP15K$_{\text{FR-EN}}$} & \multirow{2}{*}{\makecell{Avg.\\H@1}}  \\
                                 &                         & H@1 & H@5 & MRR & H@1 & H@5 & MRR & H@1 & H@5 & MRR & H@1 & H@5 & MRR & H@1 & H@5 & MRR \\
        \midrule
        \multirow{8}{*}{\parbox[c][1cm][c]{1cm}{\centering Inherent \\ DNC}} 
        & EVA         & 90.7 & 95.7 & 93.0 & 86.5 & 94.0 & 89.8 & 89.8 & 96.6 & 92.8 & 94.8 & 98.8 & 96.5 & 98.7 & 99.8 & 99.2 & 92.1 \\
        & MCLEA       & 93.8 & 98.3 & 95.9 & 92.1 & 97.7 & 94.6 & 94.5 & 98.6 & 96.4 & 97.8 & 99.7 & 98.7 & 99.2 & 99.9 & 99.5 & 95.5\\
        & XGEA        & 83.5 & 94.4 & 88.6 & \underline{93.9} & 97.3 & \underline{95.8} & 91.4 & 97.4 & 94.1 & 94.3 & 98.0 & 96.0 & 97.3 & 99.3 & 98.2 & 92.1 \\
        & MEAformer   & \underline{95.9} & \underline{98.8} & \underline{97.2} & 93.8 & \underline{97.9} & 95.7 & \underline{96.7} & 99.0 & \underline{97.7} & \underline{98.8} & \underline{99.8} & \underline{99.3} & \underline{99.6} & \textbf{100.0} & \underline{99.8} & \underline{97.0}\\
        & UMAEA       & 94.8 & 98.7 & 96.6 & 92.8 & \underline{97.9} & 95.1 & 95.4 & 98.9 & 97.0 & 98.2 & 99.7 & 98.9 & 99.4 & 99.9 & 99.6 & 96.1\\
        & PMF         & 94.9 & 98.4 & 96.5 & 92.8 & 97.7 & 95.0 & 96.3 & \underline{99.1} & 97.6 & 98.5 & 99.7 & 99.1 & 99.5 & \textbf{100.0} & 99.7 & 96.4\\
        & HHEA & 89.9 & 95.5 & 92.5 & 89.7 & 95.2 & 92.2 & 68.1 & 78.8 & 73.2 & 77.0 & 86.0 & 81.1 & 85.8 & 92.2 & 88.7 & 82.1\\
        & Ours        & \textbf{98.9} & \textbf{99.2} & \textbf{99.1} & \textbf{97.6} & \textbf{98.8} & \textbf{98.2} & \textbf{98.3} & \textbf{99.5} & \textbf{98.8} & \textbf{99.3} & \textbf{99.9} & \textbf{99.6} & \textbf{99.8} & \textbf{100.0} & \textbf{99.9} & \textbf{98.8}\\
        \midrule
        \multirow{8}{*}{\parbox[c][1cm][c]{1cm}{\centering 20\% \\ DNC}}
        & EVA         & 67.4 & 76.2 & 71.6 & 17.9 & 21.4 & 19.7 & 64.2 & 78.9 & 70.8 & 72.6 & 85.7 & 78.5 & 88.0 & 95.2 & 91.3 & 62.0\\
        & MCLEA       & 89.0 & 95.2 & 91.8 & 88.8 & 95.8 & 92.0 & 91.5 & 97.0 & 94.0 & 95.6 & 98.8 & 97.0 & 97.8 & 99.6 & 98.6 & 92.5\\
        & XGEA        & 56.1 & 67.3 & 61.7 & 60.1 & 71.4 & 65.5 & 89.5 & 96.4 & 92.6 & 92.4 & 98.2 & 95.0 & 96.6 & 98.9 & 97.6 & 78.9\\
        & MEAformer   & \underline{93.8} & \underline{97.6} & \underline{95.6} & \underline{91.8} & \underline{97.2} & \underline{94.3} & \underline{95.5} & \underline{98.5} & \underline{96.8} & \underline{98.3} & \underline{99.6} & \underline{98.9} & \underline{99.4} & \underline{99.9} & \underline{99.7} & \underline{95.7}\\
        & UMAEA       & 90.3 & 96.5 & 93.1 & 86.8 & 95.1 & 90.5 & 94.1 & 98.2 & 95.9 & 97.2 & 99.4 & 98.2 & 98.8 & \underline{99.9} & 99.3 & 93.5\\
        & PMF         & 92.2 & 96.9 & 94.3 & 90.9 & 96.3 & 93.4 & 94.8 & 98.1 & 96.3 & 97.6 & 99.3 & 98.3 & 99.2 & \underline{99.9} & 99.5 & 94.9\\
        & HHEA & 87.6 & 93.8 & 90.5 & 89.3 & 94.6 & 92.1 & 66.1 & 75.9 & 70.8 & 72.6 & 81.9 & 77.0 & 83.5 & 90.2 & 86.6 & 79.8\\
        & Ours        & \textbf{98.3} & \textbf{98.9} & \textbf{98.6} & \textbf{97.5} & \textbf{98.7} & \textbf{98.1} & \textbf{97.6} & \textbf{99.1} & \textbf{98.3} & \textbf{99.1} & \textbf{99.9} & \textbf{99.5} & \textbf{99.8} & \textbf{100.0} & \textbf{99.9} & \textbf{98.5}\\
        \midrule
        \multirow{8}{*}{\parbox[c][1cm][c]{1cm}{\centering 50\% \\ DNC}}
        & EVA         & 2.7 & 3.8 & 3.4 & 0.0 & 0.1 & 0.2 & 17.5 & 31.7 & 24.2 & 18.4 & 33.2 & 25.2 & 15.3 & 30.5 & 22.4 & 10.8\\
        & MCLEA       & 78.9 & 88.3 & 83.2 & 75.9 & 88.1 & 81.5 & 84.5 & 91.7 & 87.8 & 88.7 & 94.7 & 91.4 & 93.5 & 97.5 & 95.4 & 84.3\\
        & XGEA        & 50.3 & 60.3 & 55.3 & 34.8 & 44.5 & 39.8 & 71.3 & 86.4 & 78.0 & 70.1 & 85.5 & 77.0 & 88.7 & 95.9 & 91.9 & 63.0\\
        & MEAformer   & \underline{91.9} & \underline{96.7} & \underline{94.1} & \underline{91.9} & \underline{96.8} & \underline{94.1} & \underline{93.4} & \underline{97.3} & \underline{95.2} & \underline{97.3} & \underline{99.1} & \underline{98.1} & \underline{99.1} & \underline{99.9} & \underline{99.5} & \underline{94.7}\\
        & UMAEA       & 87.0 & 94.4 & 90.4 & 85.7 & 93.9 & 89.4 & 91.4 & 96.7 & 93.8 & 95.9 & 98.8 & 97.2 & 98.1 & 99.6 & 98.8 & 91.6\\
        & PMF         & 86.9 & 93.9 & 90.0 & 87.6 & 94.4 & 90.7 & 92.2 & 96.5 & 94.2 & 96.1 & 98.8 & 97.3 & 98.6 & 99.6 & 99.1 & 92.3\\
        & HHEA & 86.2 & 92.8 & 89.2 & 84.2 & 92.1 & 87.8 & 56.8 & 71.3 & 63.7 & 70.5 & 82.2 & 75.9 & 76.9 & 86.1 & 81.1 & 74.9\\
        & Ours        & \textbf{97.7} & \textbf{98.3} & \textbf{98.0} & \textbf{97.0} & \textbf{98.2} & \textbf{97.6} & \textbf{96.3} & \textbf{98.1} & \textbf{97.2} & \textbf{98.7} & \textbf{99.7} & \textbf{99.1} & \textbf{99.7} & \textbf{100.0} & \textbf{99.8} & \textbf{97.9}\\
        \bottomrule
    \end{tabular}
    }
\end{table*}

As shown in Tables~\ref{tab: main table visual}-\ref{tab: main table textual}, we could have the following conclusions:
i) existing methods face substantial performance degradation as noise increases, highlighting their vulnerability to noisy correspondences. 
In contrast, RULE outperforms all baselines across different datasets and noise settings, demonstrating superior robustness against DNC;
ii) even without any manually-injected noise, RULE still achieves performance gains compared to existing methods, as the real-world MMEA datasets contain a considerable number of DNC.
To further verify the effectiveness of RULE, we conduct experiments under the manually-injected noise ratio from $0.0$ to $0.7$. As shown in Fig.~\ref{fig: finer-grained ablation} (a), RULE not only achieves higher performance across all noise levels but also exhibits significantly slower performance degradation, which further confirms the robustness of RULE against DNC.

\subsection{Analysis and Ablation Study}
\label{sec: Ablation and Analytic Study}
In this section, we conduct analysis and ablation studies on the ICEWS-WIKI dataset.

\begin{figure}[htbp]
    \centering
    \begin{minipage}{0.33\linewidth}
        \centering
        \includegraphics[width=0.9\linewidth]{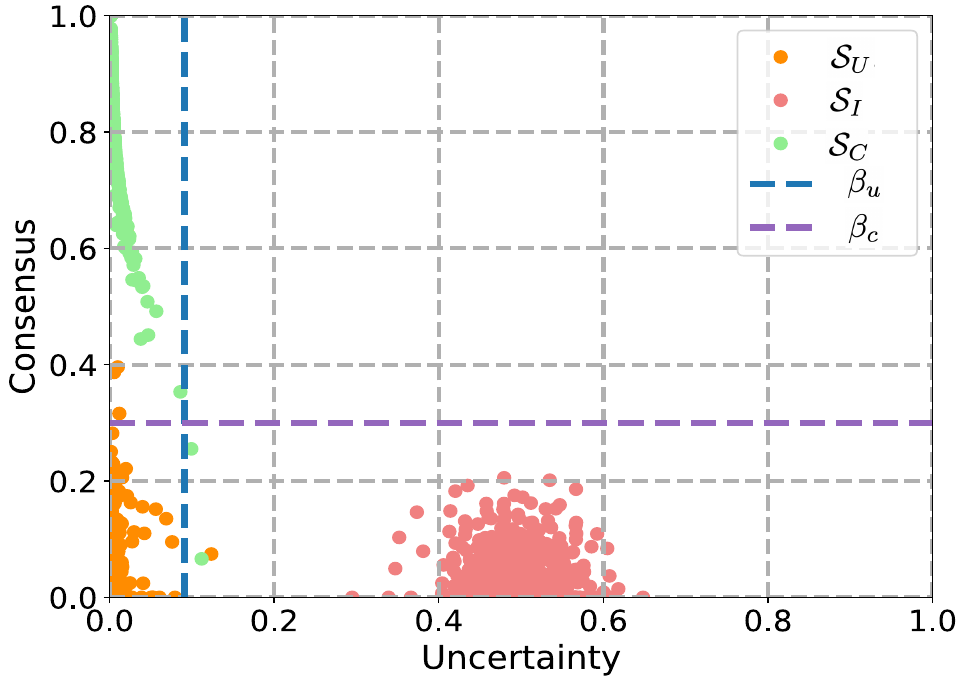}
        \vspace{-1em} % ⬅ 减少图和caption之间距离
        \caption{Quantitative analysis of the uncertainty and consensus on the name attribute.}
        \label{fig: three_pairs}
    \end{minipage}
    \hspace{0.2cm}  % 创建一个宽度为0.5cm的空白
    \begin{minipage}{0.6\linewidth}
        \centering
        \captionof{table}{Ablation study of the various modules in the train and test stages.}
        %Train代表仅考虑训练阶段的鲁棒性，test代表在上述基础上进一步考虑测试阶段的鲁棒性
        \label{tab: ablation module}
        \vspace{-0.5em} % ⬅ 可调节上下距离
        \resizebox{0.9\linewidth}{!}{
        \begin{tabular}{c|l|ccc|ccc}
        \toprule
        \multirow{2}{*}{Stage} & \multirow{2}{*}{Setting} & \multicolumn{3}{c|}{\textbf{Non-name}} & \multicolumn{3}{c}{\textbf{All-attributes}} \\
         & & H@1 & H@5 & MRR & H@1 & H@5 & MRR \\
        \midrule
        \multirow{4}{*}{Train} 
        & w/o DRL        & 31.6 & 45.9 & 38.6 & 82.3 & 90.4 & 86.0 \\
        & w/o DRF        & 50.4 & 66.2 & 57.6 & 93.4 & 97.4 & 95.2 \\
        & Only Unc.      & 53.5 & 67.8 & 60.2 & 93.6 & 97.4 & 95.4 \\
        & Only Cons.     & 48.3 & 60.3 & 54.3 & 87.7 & 93.2 & 90.4 \\
        \midrule
        \multirow{3}{*}{Test} 
        % & w/o DRF        & 57.6 & 69.0 & 63.0 & 97.5 & 98.2 & 97.8 \\
        & w/o DRF        & 52.4 & 66.2 & 59.0 & 95.1 & 97.9& 96.3 \\
        & w/o TTR        & 56.5 & 68.6 & 62.3 & 94.0 & 97.7 & 95.7 \\
         & MLLM Enhance   & 56.6 & 69.0 & 62.4 & 97.6 & 98.2 & 97.9 \\
        \midrule
        Both & Default    & 58.2 & 69.7 & 63.6 & 97.7 & 98.3 & 98.0 \\
        \bottomrule
        \end{tabular}
        }
    \end{minipage}
\end{figure}

\textbf{Analysis Studies on Uncertainty and Consensus.}
As discussed in Section~\ref{sec: Reliability Estimation and Pair Division}, the estimated reliability plays a key role in identifying DNC. To better understand its behavior, we visualize the reliability distribution of all training entity pairs.  
As shown in Fig.~\ref{fig: finer-grained ablation}(b), clean pairs are concentrated on the right side of the plot (indicating high reliability), while noisy pairs are predominantly on the left (indicating low reliability). This confirms that the proposed reliability serves an effective indicator for distinguishing clean and noisy pairs.
To further explore how the proposed uncertainty and consensus behave under noise, we construct subsets $\mathcal{S}_{U}$ and $\mathcal{S}_{I}$ by injecting synthetic noise and randomly shuffling the name attributes of the raw set $\mathcal{S}_{C}$. As illustrated in Fig.~\ref{fig: three_pairs}, uncertainty and consensus principles successfully separate the three subsets, which supports the design of our tailored loss strategies in Eq.~\ref{eq: dually robust loss}.

\textbf{Effectiveness of Robust Fusion}. % fusion 过程中对E-A NC的分析
To qualitatively study the effectiveness of RULE in handling entity-attribute noise, we visualize the reliability in Eq.~\ref{eq: reliablity_weights} during the fusion process.
As shown in Fig.\ref{fig: visualization}, correctly associated attributes are assigned high reliability scores, while noisy or irrelevant attributes receive significantly lower scores. This behavior confirms that RULE effectively suppresses the influence of unreliable attributes during fusion, thereby enhancing robustness against entity-attribute noise.

\textbf{Ablation studies.}
To verify the effectiveness of each component in our framework, we conduct ablation experiments on the modules involved in both training and test-time phases. 
According to the results in Table~\ref{tab: ablation module}, one could have the following conclusions.
First, during training phase, both the ``Only Unc.'' variant (which applies the uncertainty-guided loss in Eq.~\ref{eq: evi loss}) and the ``Only Cons.'' variant (which uses a consensus-based MSE loss) outperform the baseline ``w/o DRL'' (which uses only a standard MSE loss). This demonstrates the effectiveness of our proposed Dually Robust Learning mechanism in handling noisy correspondence.
Second, during the test phase, the TTR module significantly improves alignment performance by uncovering latent semantic connections. In particular, the comparison between the ``MLLM Enhance'' (which only uses rethinking scores in Eq.~\ref{eq: mllm reasoning}) and ``w/o TTR'' settings shows that combining rethinking scores with prior similarity scores leads to complementary effects, resulting in improved robustness and accuracy.
Third, the Dually Robust Fusion (DRF) module effectively mitigates the influence of intra-entity NC. Its inclusion enhances performance in both the training and testing stages.

\begin{figure}[t]
    \centering
    \includegraphics[width=0.95\linewidth]{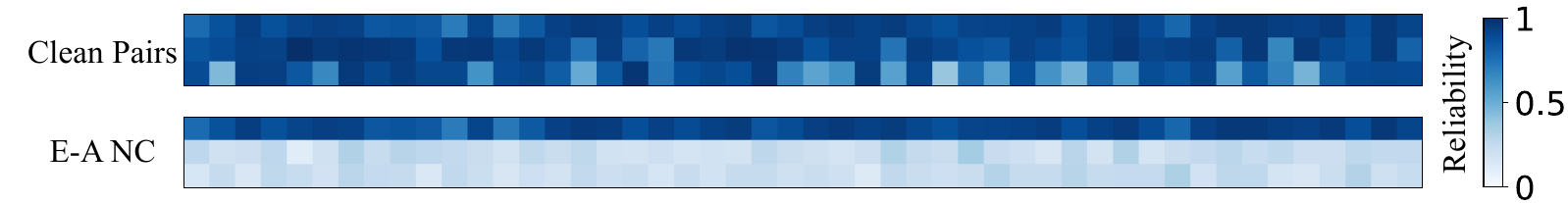}
    \caption{Visualization of the reliability for clean entity pairs and those with manually injected E-A NC in the image and name attributes. From top to bottom, the three rows denote the structure, image, and name attributes, respectively.}
    \label{fig: visualization}
\end{figure}

\section{Conclusion}
In this paper, we study a new problem in MMEA, \textit{i.e.}, Dual-level Noisy Correspondence, which refers to the wrongly annotated intra-entity and inter-graph correspondences.
To solve this problem, the proposed methods estimate the reliability of both the intra-entity and inter-graph correspondences and alleviate the negative impact of NC during the inter-graph discrepancy elimination and intra-entity attribution fusion.
Beyond the training-time design, we employ a novel correspondence reasoning module to guarantee more accurate equivalent entity identification during inference.
We believe this work might remarkably enrich the learning
paradigm with noisy correspondence by simultaneously considering the noise across both training-time and test-time.

% \subsubsection*{Author Contributions}
% If you'd like to, you may include  a section for author contributions as is done
% in many journals. This is optional and at the discretion of the authors.

% \subsubsection*{Acknowledgments}
% Use unnumbered third level headings for the acknowledgments. All
% acknowledgments, including those to funding agencies, go at the end of the paper.

\bibliography{neurips_2025}
\bibliographystyle{neurips_2025}

\newpage
\appendix
\begin{leftline}
	{
		\LARGE{\textsc{Appendix}}
	}
\end{leftline}
	\etocdepthtag.toc{mtappendix}
    \etocsettagdepth{mtchapter}{none}
    \etocsettagdepth{mtappendix}{subsection}
    
    {
     %    \hypersetup{linkcolor=black}
    	% \footnotesize\tableofcontents
     %    \etocsettocstyle{}{}  % 去掉“Contents”标题
     
        \footnotesize
        \etocsettocstyle{}{}  % 去掉“Contents”标题
        \tableofcontents
    }

% \tableofcontents

\newpage

\section{Noise Statistics in Real-World Benchmarks}
\label{sec: Noise Statistics in Real-World Benchmarks}
In this section, we elaborate on the necessity of addressing the DNC challenge in real-world scenarios and discuss the underlying reasons behind the formation of the DNC.
\subsection{Statistics Analysis}
As discussed in the manuscript, real-world benchmarks implicitly contain both intra-entity and inter-graph noisy correspondences. To support this claim, we conducted an additional observational study to analyze the types and distributions of noise present in the datasets. Specifically, we randomly sample 1,000 entity pairs from the training set in the ICEWS-WIKI and ICEWS-YAGO benchmarks and conduct manual statistical analysis. 
After that, we categorize each entity pair into one of four types, \textit{i.e.}, ``Clean'', ``E-E NC'', ``E-A NC'' or ``A-A NC''.
As shown in Fig.~\ref{fig: statistics}, one could have the following observation and conclusions:
\begin{itemize}
    \item Even after careful manual annotation, a considerable amount of DNC still exists in real-world datasets, \textit{i.e.}, more than 50\% of entity pairs in ICEWS benchmarks suffer from the DNC challenge.
    \item Among the three types of NC, entity-attribute NC and attribute-attribute NC account for a large proportion (\textit{e.g.}, over 40\% in ICEWS benchmarks).
    Compared with the entity-entity NC, these two types of NC are more fine-grained, which makes it difficult to uncover manually.
    It is worth emphasizing that although several NC-oriented methods have been proposed, nearly all of them overlook such non-negligible fine-grained noise in MMEA, further highlighting the necessity of addressing the DNC challenge.
\end{itemize}
As mentioned above, although the two commonly-used MMEA datasets are small in scale (\textit{i.e.}, containing up to 30K entity pairs) and carefully annotated by humans, they still exhibit non-negligible DNC cases. 
Such a DNC challenge would become even more severe in large-scale knowledge graphs such as Wikidata, DBpedia, and YAGO datasets (\textit{i.e.}, containing over 10M-100M entities).
We believe that the DNC challenge is one of the key reasons why large-scale datasets are currently lacking in the MMEA field. The proposed RULE framework offers a promising solution toward achieving entity alignment on larger-scale datasets with the DNC challenge.

\begin{figure}[htbp]
    \centering
    \includegraphics[width=0.8\linewidth]{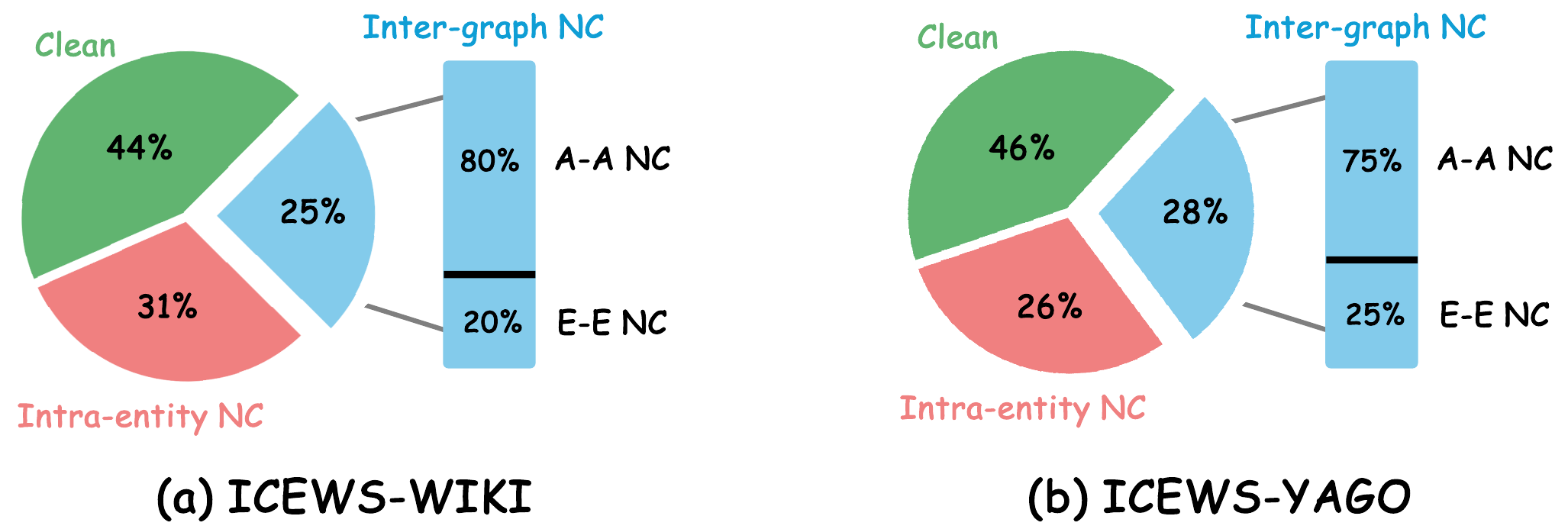}
    \caption{Distribution analysis of DNC noise in ICEWS-WIKI and ICEWS-YAGO benchmarks.}
    \label{fig: statistics}
\end{figure}

\subsection{The Underlying Reason Behind DNC}
To investigate the underlying causes of DNC, we analyze the formation of intra-entity NC and inter-graph NC from two perspectives, \textit{i.e.}, knowledge graph construction and cross-graph annotation.

\textbf{Knowledge Graph Construction: } The construction of existing knowledge graphs~\citep{wiki_intra_nc} (\textit{e.g.}, Wikidata, Freebase, and YAGO) heavily relies on crowdsourcing~\citep{crowdsourcing}, \textit{i.e.}, user editing on collaborative platforms.
However, as most users lack expert knowledge, their edits are prone to errors such as incorrect data entry and conceptual misunderstandings, thereby inevitably introducing incorrect attributes within an entity.
As a result, the editing errors in crowdsourcing would lead to intra-entity NC.

\textbf{Cross-graph Annotation:} 
In real-world scenarios, most MMEA datasets are annotated using semi-automated methods, which would inevitably introduce the inter-graph NC.
Specifically, to construct entity pairs between the ICEWS and WIKI knowledge graphs, \citet{SimpleHHEA} employs the name attributes from the ICEWS graph as queries and uses the Wikidata API to retrieve the most relevant matches. 
As a result, entity-entity pairs are established between the queries and the retrieved results. However, such a process inevitably introduces inter-graph E-E NC due to the following reasons:
i) some entities exist in ICEWS but do not appear in Wikidata, making it impossible to establish correct correspondences;
ii) even some entities have highly similar name attributes, they could be very obvious mismatches. For instance, as shown in Fig.~\ref{fig: fig1}(a) in the manuscript, the movie entity ``Mr. \& Mrs. Smith'' may be mistakenly aligned with the real-life actor couple ``Will Smith and Mrs. Smith.'' In this case, despite the similarity in names, the two refer to totally different entities with distinct multi-modal attributes. In other words, entities considered ``neighbors'' during cross-graph annotation (\textit{e.g.}, retrieved via name-based lookups), may not actually be relevant in real-world scenarios. Such fine-grained errors require expert knowledge to distinguish, and may be undetected even during manual filtering.
Both of the above reasons would lead to obvious mistakes, where the aligned entity pairs are totally unrelated.
It is worth noting that such obvious mistakes prove the reasonableness of constructing incorrect correspondences to simulate real-world noise in our experiments.
% To further support this claim, we conduct analytic experiments on the ICEWS-WIKI and ICEWS-YAGO datasets. 
% Specifically, we manually inspect the two datasets and identify a non-negligible number of E-E noisy pairs as we observed in Table~\ref{fig: statistics}. 
% Here, we randomly sampled 200 E-E noisy pairs from the above noisy subsets and group them based on whether the entities are weakly related or entirely unrelated. 
% From the results in Table~\ref{tab: eenc_analysis}, one could observe that totally unrelated E-E pairs are more common in real-world scenarios. 
% For example, pairs like ``Luis Almeida vs Nani'' are unrelated, as they refer to totally different individuals. In contrast, ``France Telecom vs Orange S.A.'' represents a weakly related pair, as the two names refer to the same company before and after a rebranding.

Moreover, once the inter-graph entity pairs are associated, their corresponding attributes could be treated as matched. 
Therefore, as the by-product of the entity-attribute and entity-entity correspondences establishment, the association between inter-graph attributes might be mislead due to the aforementioned two kinds of annotation errors. 
In other words, E-A NC and E-E NC would further propagate to inter-graph attribute pairs, leading to noisy A-A correspondences. 
Beyond explicit annotation errors in A-A pairs, some attributes may contain textual typos or visual ambiguity (\textit{e.g.}, noise under low-light conditions) in real-world scenarios, which would also lead to inter-graph A-A NC.

% Beyond explicit annotation errors in A-A pairs, some attributes that indeed belong to the same entity could be treated as noisy pairs in cross-graph alignment within a specific modality, \textit{e.g.}, as shown in Fig.~\ref{fig: fig1}(c), the images of  ``Portuguese Flag'' and ``Cristiano Ronaldo'' are highly dissimilar from the visual perspective.
% Clearly, for the image encoders, enforcing cross-graph alignment for these cases may severely degrade the performance.

% \begin{table}[htbp]
% \centering
% \caption{Analysis of the detailed types of the E-E NC in ICEWS-WIKI and ICEWS-YAGO datasets.}
% \label{tab: eenc_analysis}
% \resizebox{0.5\linewidth}{!}{
% \begin{tabular}{l|cc}
% \toprule
% Types & ICEWS-WIKI & ICEWS-YAGO \\
% \midrule
% Weakly Related    & 29\% & 32\% \\
% Totally Unrelated & 71\% & 68\% \\
% \bottomrule
% \end{tabular}
% }
% \end{table}

\newpage

\section{Related Work}
\label{sec: related}
In this section, we briefly review two topics related to this work, \textit{i.e.}, multi-modal entity alignment and learning with noisy correspondence.
\subsection{Multi-modal Entity Alignment}
Multi-modal entity alignment aims to eliminate the discrepancy between heterogeneous MMKGs, so that the equivalent entities from various MMKGs could be identified.
Towards achieving this goal, numerous MMEA approaches have been proposed, which typically involve the following two-stage pipeline, namely, fuse the intra-entity attributes to form the representation for entities and perform cross-graph alignment on the paired attributes and entities.
According to their primary focus, most existing approaches could be broadly grouped into two categories:
i) fusion-centric methods~\citep{MEAformer,PMF}, which assign different weights according to the importance of each attribute-specific attribute during the fusion stage.
ii) alignment-centric methods~\citep{ACK-MMEA,MCLEA}, which mitigate inter-graph discrepancy by maximizing the similarity between associated pairs while minimizing that of mismatched ones across MMKGs during the alignment stage.

Among the existing approaches, REA~\citep{REA} is the most relevant to our work, while having the following remarkable differences.
First, REA focuses on inter-graph uni-modality entity alignment, whereas our work tackles multi-modal entity alignment, which involves mitigating discrepancies not only between heterogeneous graphs but also across multi-modal attributes.
Second, REA only considers misalignment in entity-entity pairs, while our work comprehensively reveals and studies the noisy correspondence at dual levels, namely, both the intra-entity and inter-graph.
Third, unlike REA solely concentrates on training-time robustness, our method further improves test-time robustness, facilitating more precise cross-graph equivalent entity identification.

\subsection{Learning with Noisy Correspondence}
Noisy correspondence refers to inherently irrelevant or relevant samples that are wrongly regarded as associated (\textit{a.k.a}, false positive) or unassociated (\textit{a.k.a}, false negative), which is first revealed and studied by~\citep{NCR,MvCLN}.
Considering that numerous applications require paired data as input, including but not limited to visual instruction tuning~\citep{huang2025enhance}, vision-language pre-training~\citep{Never}, object re-identification~\citep{DART}, and graph matching~\citep{COMMON}, how to learn with noisy correspondence rooted in data pairs has emerged as a new research direction, drawing increasing attention from both academia and industry. %每个应用和句末都引

In this paper, we focus on mitigating the negative impact of the noisy positive correspondence issue. 
Unlike most existing noisy correspondence studies that tackle the errors in correspondence of a specific level (e.g., image-to-sample or pixel-to-pixel), this work delves into the multi-modal entity alignment task and reveals the specific dual-level noisy correspondence (DNC) problem for the first time.
In brief, DNC refers to the noisy correspondence involving in the entity-attribute, entity-entity, and attribute-attribute pairs, which misleads the multi-modal fusion and cross-graph alignment processes.
Therefore, it is desirable to customize a specific approach for MMEA with the DNC problem.

\newpage

\section{Evidential Learning}
\label{appendix: evidential learning}
In this section, we provide more details about the dually robust objective function in Eq.~\ref{eq: overall}.

Following~\citep{Evidentiallearning,TMC}, the Dirichlet distribution is parameterized by $\boldsymbol{\alpha}_{i}=[\alpha_{i1},\alpha_{i2},\cdots,\alpha_{iK}]$ and the probability density function in Eq.~\ref{eq: dually robust loss} is given by,
\begin{equation}
    D(\boldsymbol{p}_{i} \mid \boldsymbol{\alpha}_{i})= \begin{cases}\frac{1}{B(\boldsymbol{\alpha}_{i})} \prod_{j=1}^{\tilde{N}} p_{ij}^{\alpha_{ij}-1} & \text { for } \boldsymbol{p}_{i} \in \mathcal{S}^{\tilde{N}}, \\ 0 & \text { otherwise },\end{cases}
\end{equation}
where $B(\boldsymbol{\alpha}_{i})$ is the multinomial beta function, and $\mathcal{S}^{\tilde{N}}$ is the $K$-dimensional unit simplex.

With the above derivation, MMEA aims to guide the query probability $\boldsymbol{p}_{i}$ to approach the annotated correspondence $\boldsymbol{y}_{i}$. 
To achieve this, the uncertainty-based loss could be formulated as follows,
\begin{equation}
    \begin{aligned}
        \mathcal{L}_{U}(\boldsymbol{\alpha}_{i},\boldsymbol{y}_{i})&=\int\left\|\boldsymbol{y}_i-\boldsymbol{p}_i\right\|_2^2 \frac{1}{B\left(\boldsymbol{\alpha}_i\right)} \prod_{j=1}^{\tilde{N}} p_{i j}^{\alpha_{i j}-1} d \mathbf{p}_i \\
        &= \sum_{j=1}^{\tilde{N}} \left[\left(y_{i j}-\mathbb{E}\left[p_{i j}\right]\right)^2+\operatorname{Var}\left(p_{i j}\right)\right]\\
        & = \sum_{j=1}^{\tilde{N}} \left(y_{i j}-\frac{\alpha_{i j}}{Q_i}\right)^2+\frac{\alpha_{i j}\left(Q_i-\alpha_{i j}\right)}{Q_i^2\left(Q_i+1\right)}.
    \end{aligned}
    \label{eq: evi loss}
\end{equation}
where $\mathbb{E}\left[p_{i j}\right]$ and $\operatorname{Var}\left(p_{i j}\right)$  are the expected value and the variance of $p_{ij}$, respectively.
Following~\citep{Evidentiallearning}, the expected probability $\mathbb{E}\left[p_{i j}\right]$ could be estimated by $\frac{\alpha_{i j}}{Q_i}$.
Intuitively, Eq.~\ref{eq: evi loss} encourages higher evidence for correct correspondence compared to mismatched ones, but also prevents excessive optimization when the overall evidence is limited.
In other words, the probability $\frac{\alpha_{i j}}{Q_i}$ is proportional to the total evidence $Q_i$, thus limiting overconfidence in noisy correspondences.
Specifically,

\begin{theorem}
    The uncertainty-aware probability $\frac{\alpha_{ij}}{Q_i}$ is upper bounded by $\frac{Q_i - K + 1}{Q_i}$, which is proportional to $Q_{i}$, \textit{i.e.},
    \begin{equation}
        \frac{\alpha_{ij}}{Q_i} \leq \frac{Q_i - K + 1}{Q_i} \propto Q_i.
    \end{equation}
    \label{pro: uncertainty upper bound}
\end{theorem}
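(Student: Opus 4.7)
The plan is to derive the bound by exploiting the fact that every Dirichlet parameter is bounded below by one. First I would fix the convention by identifying $K$ with $\tilde{N}$, the dimensionality of the evidence/belief vectors introduced in Eq.~\ref{eq: uncertainty}. Then, from the construction $\alpha_{ij'} = e_{ij'} + 1$ in the paragraph preceding Definition~1, together with the non-negativity of evidence $e_{ij'} = \exp(\tanh(s_{ij'}/\tau)) \geq 0$ (in fact $\geq 1$ since $\tanh \geq -1$, though $\alpha_{ij'}\geq 1$ is all that is needed), one obtains the uniform lower bound $\alpha_{ij'} \geq 1$ for every index $j'$.

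Next I would split $Q_i$ by isolating the $j$-th component: $Q_i = \alpha_{ij} + \sum_{j' \neq j} \alpha_{ij'}$. Applying the lower bound $\alpha_{ij'} \geq 1$ to each of the $K-1$ remaining terms yields $\sum_{j' \neq j} \alpha_{ij'} \geq K - 1$, hence $\alpha_{ij} \leq Q_i - (K-1)$. Dividing both sides by $Q_i > 0$ gives the desired upper bound $\frac{\alpha_{ij}}{Q_i} \leq \frac{Q_i - K + 1}{Q_i}$.

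For the proportionality claim, I would rewrite the bound as $\frac{Q_i - K + 1}{Q_i} = 1 - \frac{K-1}{Q_i}$, which is a strictly increasing function of $Q_i$ that tends to $1$ as $Q_i \to \infty$ and collapses towards $\frac{1}{K}$ at the minimal Dirichlet strength $Q_i = K$ (attained when no evidence accumulates and all $e_{ij'}=0$). This monotonicity formalises the interpretation used in Section~\ref{sec: Inter-graph Discrepancy Elimination}: the ceiling on the expected probability $\mathbb{E}[p_{ij}] = \alpha_{ij}/Q_i$ grows with the total evidence $Q_i$, so the network cannot assign high confidence to any single correspondence until sufficient evidence has been accumulated, which is precisely the mechanism that protects against over-optimisation on noisy pairs.

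There is no real obstacle here; the result reduces to a one-line pigeonhole-style argument on the Dirichlet parameters. The only care needed is notational: reconciling the symbol $K$ in the theorem statement with the $\tilde{N}$ used in the belief-mass definition, and making explicit that the constant $+1$ shift in $\alpha_{ij} = e_{ij}+1$ is what supplies the $K-1$ slack after removing the $j$-th term from the sum.
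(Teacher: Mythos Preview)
Your proposal is correct and follows essentially the same argument as the paper: both use the lower bound $\alpha_{ij'}\geq 1$ on the remaining $K-1$ Dirichlet parameters to obtain $\alpha_{ij}\leq Q_i-(K-1)$, then divide by $Q_i$. Your additional remarks on reconciling $K$ with $\tilde{N}$ and on interpreting the ``proportionality'' as monotonic growth of $1-(K-1)/Q_i$ are helpful clarifications that the paper leaves implicit.
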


\begin{proof}
    According to the definition of evidence in Eq.~\ref{eq: evi}, each correspondence satisfies \(\alpha_{il} \geq 1\) for all \(l \neq j\). Therefore, $Q_{i}$ is lower-bounded by
    \begin{equation}
        Q_i = \alpha_{ij} + \sum_{l \neq j} \alpha_{il} \geq \alpha_{ij} + (K - 1).
    \end{equation}
    Rearranging the inequality yields,
    \begin{equation}
        \alpha_{ij} \leq Q_i - (K - 1).
    \end{equation}
    Then, dividing both sides by $Q_{i}$, we obtain the desired upper bound,
    \begin{equation}
    \frac{\alpha_{ij}}{Q_i} \leq \frac{Q_i - K + 1}{Q_i},
    \end{equation}
    where the upper bound is proportional to $Q_{i}$, \textit{i.e.},$\frac{Q_i - K + 1}{Q_i} \propto Q_i$.
\end{proof}

\newpage

\section{Proof of Theorem~\ref{theorem: wrong uncertainty}}
\label{appendix: theorem}
% \begin{proof} %举例证明不太formal 
% The uncertainty mass $u_i$ depends only on the total evidence $Q_i$, and is agnostic to how this evidence is distributed across various correspondences. 
% Consider two evidence vectors $\boldsymbol{e}_i^{(1)}$ and $\boldsymbol{e}_i^{(2)}$ such that,
% \[
% e_{ij}^{(1)} = 
% \begin{cases}
% Q_i, & \text{if } j = j_1, \\
% 0, & \text{otherwise},
% \end{cases}
% \qquad
% e_{ij}^{(2)} = 
% \begin{cases}
% Q_i, & \text{if } j = j_2, \\
% 0, & \text{otherwise},
% \end{cases}
% \]
% where $j_1 \ne j_2$, and $Q_i > 0$ is fixed. 
% According to Eq.~\ref{eq: mass}, both evidence vectors yield the same total evidence $Q_i$ and hence the same uncertainty mass $u_i$.
% Clearly, only one of $j_{1}$ and $j_{2}$ might indicate Consequently, low uncertainty does not necessarily guarantee that the correspondence is correct.
% \end{proof}
In this section, we present detailed proofs for Theorem~\ref {theorem: wrong uncertainty} in the main paper.
\begin{theorem}
A low uncertainty $u_i$ does not necessarily imply that the highest belief is assigned to the annotated correspondence, \textit{i.e.}, $y_{ij}=1$,
\begin{equation}
    z_i \text{ with low } u_i \centernot\Rightarrow \arg\max\, \boldsymbol{b}_{i} = \arg\max \boldsymbol{y}_{i}.
\end{equation}
\end{theorem}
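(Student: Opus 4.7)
Since the claim is of the form ``$P$ does not necessarily imply $Q$'', the natural strategy is to exhibit a single counterexample in which $u_i$ is small while $\arg\max \boldsymbol{b}_i$ disagrees with $\arg\max \boldsymbol{y}_i$. The key observation I would rely on is that by Eq.~\ref{eq: uncertainty}, $u_i = \tilde N / Q_i$ depends only on the total Dirichlet strength $Q_i = \sum_{j} \alpha_{ij}$, not on how the evidence is distributed across the $\tilde N$ candidates. In other words, shifting evidence mass from the annotated index to any other index leaves $u_i$ unchanged, while it can flip $\arg\max \boldsymbol{b}_i$ to a wrong candidate. This structural asymmetry is what makes the implication fail, and it should drive the construction.

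The plan is to first rewrite both quantities in terms of the raw evidences $e_{ij} = \exp(\tanh(s_{ij}/\tau))$ so that the bounded range $e_{ij} \in [e^{-1},e]$ is explicit. Then I would pick the simplest nontrivial setting, e.g.\ $\tilde N = 2$ with annotated correspondence $\boldsymbol{y}_i = (1,0)$, and assign $e_{i1} = e^{-1}$ (small evidence for the true match) together with $e_{i2} = e$ (large evidence for the distractor). Plugging into Eq.~\ref{eq: uncertainty}, I would compute $Q_i = 2 + e^{-1} + e$, conclude that $u_i = 2/(2+e^{-1}+e)$ sits at the minimum attainable value for $\tilde N = 2$, and hence is ``low'' in the sense that no assignment of similarities can make $u_i$ smaller. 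At the same time $b_{i1} = e^{-1}/Q_i < b_{i2} = e/Q_i$, so $\arg\max \boldsymbol{b}_i = 2 \neq 1 = \arg\max \boldsymbol{y}_i$, which contradicts the candidate implication and finishes the proof.

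To make the argument robust rather than tied to a single toy instance, I would then phrase a short generalisation: for any $\tilde N$ and any target ``low'' threshold $\varepsilon$ achievable under the $\tanh$-bounded evidence, one can saturate all $e_{ij}$ near $e$ (driving $u_i$ to its minimum) while still placing the largest evidence on some $j' \neq \arg\max \boldsymbol{y}_i$. The belief argmax then coincides with $j'$, while $u_i$ is simultaneously minimised. This parametric version shows that the failure is not a boundary artifact but a generic consequence of $u_i$'s invariance to the identity of the highest-evidence index.

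The main obstacle I anticipate is the constraint that $e_{ij} \in [e^{-1},e]$, which imposes a non-trivial floor on $u_i$ (roughly $u_i \geq \tilde N/(\tilde N(1+e)) = 1/(1+e)$). I need to be careful in claiming ``low uncertainty'': rather than taking $u_i \to 0$, I should argue that within the attainable range of $u_i$, the value achieved in my counterexample is already at the infimum, so the theorem's qualitative notion of ``low'' is indisputably met. Once that subtlety is handled, the rest of the proof is a direct computation from Eq.~\ref{eq: evi} and Eq.~\ref{eq: uncertainty}.
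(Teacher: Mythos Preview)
Your proposal is correct and rests on the same core observation as the paper: $u_i = \tilde{N}/Q_i$ depends only on the total evidence $Q_i$, not on which index carries the mass, so one can have low $u_i$ with the belief argmax on any index. The paper packages this as a proof by contradiction using two one-hot evidence vectors $\boldsymbol{e}_i^{(1)}, \boldsymbol{e}_i^{(2)}$ concentrated on distinct indices $j_1 \neq j_2$ with identical $Q_i$; you give a direct counterexample instead. Both are fine, but your version is actually more careful: the paper's construction sets some $e_{ij}=0$, which is not attainable under Eq.~\ref{eq: evi} since $e_{ij} = \exp(\tanh(s_{ij}/\tau)) \in (e^{-1},e)$, whereas you explicitly work inside that range.

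One small correction: your $\tilde{N}=2$ instance with $e_{i1}=e^{-1}$, $e_{i2}=e$ does \emph{not} sit at the minimum attainable uncertainty. The minimum occurs when both evidences approach $e$, giving $u_i \to 1/(1+e)$, which is strictly smaller than your $2/(2+e^{-1}+e)$. Your generalisation paragraph already contains the right fix---push all $e_{ij}$ toward $e$ while keeping the largest one on a wrong index $j'$---so simply use that construction as the primary counterexample and drop the ``minimum attainable'' claim for the asymmetric $\tilde{N}=2$ case.
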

\begin{proof}
We will prove the theorem by contradiction. Suppose that for \textit{all} evidence vectors with low uncertainty $u_i$, the highest belief is assigned to the annotated correspondence, i.e., 
\begin{equation}
    \forall  \boldsymbol{e}_i \text{ with low } u_i, \ \arg\max\, \boldsymbol{b}_{i} = \arg\max \boldsymbol{y}_{i}.
\end{equation}
Let us now consider the two evidence vectors $\boldsymbol{e}_i^{(1)}$ and $\boldsymbol{e}_i^{(2)}$ defined as:
\begin{equation}
    e_{ij}^{(1)} = 
\begin{cases}
Q_i, & \text{if } j = j_1, \\
0, & \text{otherwise},
\end{cases}
\qquad
e_{ij}^{(2)} = 
\begin{cases}
Q_i, & \text{if } j = j_2, \\
0, & \text{otherwise},
\end{cases}
\end{equation}
where $j_1 \ne j_2$, and $Q_i > 0$ is fixed. According to Eq.~\ref{eq: uncertainty}, both evidence vectors yield the same total evidence $Q_i$ and hence the same uncertainty $u_i$.
However, $\boldsymbol{e}_i^{(1)}$ suggests $(i,j_1)$ is the most probable correspondence, while $\boldsymbol{e}_i^{(2)}$ suggests $(i,j_2)$ as the most plausible option.
However, only one of them might indicate the correct correspondence.
Such an example contradicts the assumption that low uncertainty invariably leads to the highest belief being assigned to the annotated correspondence $j$. Consequently, the initial assumption is invalid.
\end{proof}

\section{More Implementation Details}
\label{appendix: experiments}
\subsection{Datasets}
\label{appedix: datasets}
We place the detailed statistics of the five MMEA datasets used in our experiments in Table~\ref{table: datasets}.
Note that, not every entity is paired with image-type attributes.

\begin{table}[ht]
\centering
\caption{Details of MMEA datasets. ``E-E pairs'' indicates the number of cross-graph entity-entity correspondences.}
\label{table: datasets}
\resizebox{0.95\linewidth}{!}{  
\begin{tabular}{l|l|cccccc}
\hline
\textbf{Dataset} & \textbf{KG} & \textbf{Entity} & \textbf{Triples} & \textbf{Numeric} & \textbf{Image} & \textbf{E-E pairs} \\
\hline
DBP15K$_{\text{ZH-EN}}$ & ZH (Chinese) & 19388 & 70414 & 8111 & 15912 & 15000 \\
                        & EN (English) & 19572 & 95142 & 7173 & 14125 & 15000 \\
\hline
DBP15K$_{\text{JA-EN}}$ & JA (Japanese) & 19814 & 77214 & 5882 & 12739 & 15000 \\
                        & EN (English) & 19600 & 93484 & 6066 & 13741 & 15000 \\
\hline
DBP15K$_{\text{FR-EN}}$ & FR (French) & 19661 & 105998  & 4547 & 10599 & 15000 \\
                        & EN (English) & 19993 & 115722 & 6422 & 13858 & 15000 \\
\hline
ICEWS-WIKI        & ICEWS & 11047 & 3527881 & - & 33341 & 5058 \\
                        & WIKI  & 15896 & 198257 & - & 47688 & 5058 \\
\hline
ICEWS-YAGO          & ICEWS & 26863 & 4192555  & - & 80589 & 18824 \\
                        & YAGO  & 22734 & 107118  & - & 68202 & 18824 \\
\hline
\end{tabular}
}
\end{table}

\subsection{Networks}
In this section, we describe the network architectures used for encoding multi-modal attributes.  
For the name and image attributes, we first employ the pre-trained CLIP ViT-L/14~\citep{CLIP} model to obtain the features. 
Note that, all the experiments are conducted on the same visual and textual features for fair comparison.
Following~\citep{MEAformer}, we employ trainable fully connected layers to project the image and name attributes into embedding vectors.
Missing image attributes are handled by assigning zero vectors as their embeddings.

For the DBP15K benchmarks, the multi-modal attributes include structure, relation, numerical value, image, name, and character attributes.
Following~\citep{XGEA}, we adopt cross-modal graph networks to encode the structure and relation attributes.
For the numerical attribute, we employ the Bag-of-Words (BoW) model to encode the relations as fixed-length vectors.
Besides, the character attributes are derived from character bigrams of the translated entity names.

For the ICEWS benchmarks, the multi-modal attributes include structure, image, and name attributes. Following~\citep{MEAformer}, we utilize Graph Attention Networks GAT to reduce the computational cost of encoding structural information, as the number of relation triplets in the ICEWS benchmark is significantly larger than that of the DBP15K benchmark.

\subsection{Details of Dually Robust Fusion Module}
\label{appendix: DRF}
To achieve robustness against intra-entity E-A NC, we propose a dually robust fusion that assigns low weights to unreliable attributes.
Here, we provide more details on how to construct the initial subset for the greedy strategy in Eq.~\ref{eq: greedy}. 
Although some attributes are often weakly associated or even irrelevant to the entity, we assume that most attributes should be correctly associated, otherwise, the cross-graph correspondence would not hold.
Based on this assumption, we adopt the following strategy for constructing the initial subset $\pi_0$,
\begin{equation}
|\pi_0| \geq 
    \begin{cases}
    \left\lfloor \dfrac{M}{2} \right\rfloor + 1 & \text{if } M \geq 3 \\
    \quad \quad 1 & \text{if } M \leq 2.
    \end{cases}
\end{equation}
Such behavior ensures that the initial subset contains at least half or more of the multi-modal attributes.

Notably, the formulation of the attribute uncertainty $u_{i}^{m}$ and consensus $c_{i}^{m}$ relies on correct cross-graph entity-entity correspondence.
However, the cross-graph correspondences are mismatched when E-E NC occurs, leading to unreliable $u_{i}^{m}$ and $c_{i}^{m}$.
Thus, we employ DRF only to entities $z_{i}$ that satisfy
\begin{equation}
    (1-u_{i})+c_i\geq 1.
\end{equation}
In other words, DRF is adopted to entities with reliable correspondences, which could prevent erroneous fusion.

\subsection{Evaluation Metric}
\label{appendix: metric}

We evaluate multi-modal entity alignment using Hit@1 and Mean Reciprocal Rank (MRR)~\citep{PMF}:
\begin{itemize}
\item Hit@$k$ measures the percentage of cases where the ground-truth candidate appears in the top-$k$ retrieved entity.
\item Mean Reciprocal Rank (MRR) assesses the average inverse rank of the correct entity.
\end{itemize}

\subsection{Details of Test-time Reasoning Module}
\label{appendix: TTR}
In the manuscript, we propose a test-time reasoning module to capture the underlying connections between attribute pairs. 
Here, we provide more details on the MLLM rethinking presented in Eq.~\ref{eq: mllm reasoning}.

To reduce the computational cost of MLLM reasoning, we avoid rethinking attribute pairs that satisfy the conditions as follows,
\begin{equation}
    \max (\boldsymbol{s}_{i}^{m}) \geq 0.2 \quad \lor \quad \max (\boldsymbol{s}_{i}^{m}) - \boldsymbol{s}_{ij}^{m} \geq 0.2, \ \forall s_{ij}^{m} \neq \max (\boldsymbol{s}_{i}^{m})
\end{equation}
Such behavior not only reduces the overhead of certain attribute pairs, but also encourages reasoning on unreliable pairs and thus improves MMEA performance. Notably, for the missing image attributes in the DBP15K benchmark, the corresponding rethinking scores are set to zero.

Taking the entity pair $7125-26134$ as an example, we employ the following Chain-of-Thought for the Non-name and All-attributes setting.

For a given query entity $z_{i}$, the output results of the MLLM denotes as $\boldsymbol{o}_{i}=[o_{i1};o_{i2};\cdots,o_{i10}]$, 
where each similarity satisfies $0\leq o_{ij}\leq 10$. After that, $\boldsymbol{o}_{i}$ is normalized as follow,
\begin{equation}
    \boldsymbol{\hat{o}}_{i}=(\boldsymbol{o}_{i}-5)/5.
\end{equation}
Such normalization ensures that the range of the MLLM output is consistent with that of the similarity vector $\boldsymbol{s}_{i}$ in Eq.~\ref{eq: evi}.
The $\boldsymbol{\hat{o}}_{i}$ could then employed to derive the rethinking scores via Eq.~\ref{eq: mllm reasoning}, \textit{i.e.}, $\oplus_{j \in \mathcal{T}_i^m} \left( \operatorname{CoT}\left[ x_i^{m}, (x_j')^{m}, \boldsymbol{s}_i^m \right] \right)=\boldsymbol{\hat{o}}_{i}$.

\begin{minipage}{0.99\columnwidth}
\begin{tcolorbox} 
    \raggedright
    \small
    \textbf{Chain-of-Thought for Non-name Setting}
    \begin{itemize}[leftmargin=4.5mm]
    \setlength{\itemsep}{2pt}
    
    \item \textbf{Base Prompt:}
    \begin{itemize}[leftmargin=4.5mm]
        \setlength{\itemsep}{2pt}
        \item Help me align or match entities of different knowledge graphs according to the given images and prior retrieval results.
    \end{itemize}

    \begin{center}
    \begin{minipage}{0.4\linewidth}
        \centering
        \includegraphics[width=\linewidth]{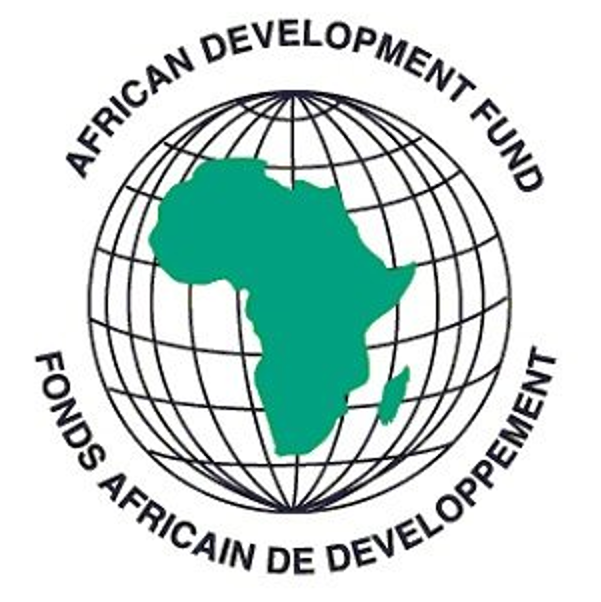}
        \small ID:7125
    \end{minipage}
    % \hfill
    \quad
    \begin{minipage}{0.4\linewidth}
        \centering
        \includegraphics[width=\linewidth]{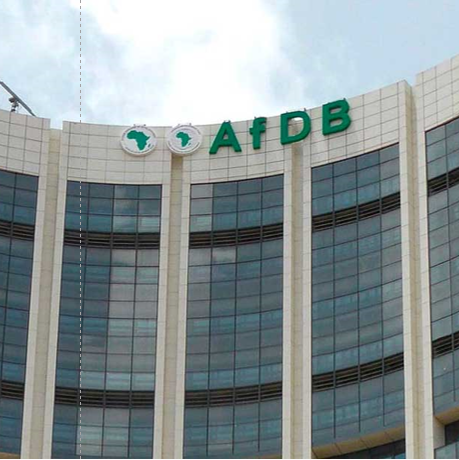}
        \small ID:26134
    \end{minipage}
    \end{center}
    
    % \item \textbf{Prior Results:}
    % \begin{itemize}[leftmargin=4.5mm]
    %     \setlength{\itemsep}{2pt}
    %     \item Below are prior retrieval results focusing on visual similarity of the given images.  
    %     \item Candidate Entities List which may be aligned with QUERY Entity (ID:7125) are shown in the following list [Format: ID Similarity]: 
    %     \begin{itemize}[leftmargin=4.5mm]
    %         \setlength{\itemsep}{2pt} 
    %         \item 22646 0.40,
    %         \item 22946 0.36, 
    %         \item 23688 0.30,
    %         \item 26364 0.26, 
    %         \item 22619 0.20, 
    %         \item 26052 0.14, 
    %         \item 26134 0.10, 
    %         \item 26518 0.02.
    %     \end{itemize}
    % \end{itemize}

    \item \textbf{Prior Results:}
    \begin{itemize}[leftmargin=4.5mm]
        \setlength{\itemsep}{2pt}
        \item Below are prior retrieval results focusing on visual similarity of the given images.
        \item Candidate Entities List which may be aligned with QUERY Entity (ID:7125) are shown in the following list [Format: ID Similarity]:
        \begin{itemize}[leftmargin=4.5mm]
            \setlength{\itemsep}{2pt} 
            \item 22646 0.40,
            \item 22946 0.36, 
            \item 23688 0.30,
            \item 26364 0.26, 
            \item 22619 0.20, 
            \item 26052 0.14, 
            \item 26134 0.10, 
            \item 26518 0.02.
        \end{itemize}
    \end{itemize}

    \item \textbf{Rethinking Image Similarity:}
    \begin{itemize}[leftmargin=4.5mm]
        \setlength{\itemsep}{2pt}
        \item The two provided images represent the query (ID:7125) and the candidate (ID:26134). 
        \item Please evaluate the probability that the QUERY and the CANDIDATE belong to the same entity \textbf{STEP BY STEP}:   
        \item 1. Rethink the visual similarities based on the prior retrieval results and the given images.          
        \item 2. Analyze the similarities of detailed visual contents between the provided images.    
        \item 3. Consider the underlying connections between the given images.
        \item \lbrack Output Format\rbrack: \lbrack IMAGE SIMILARITY\rbrack = A out of 10, where A is in range \lbrack 0,1,2,3,4,5,6,7,8,9,10\rbrack, which represents the levels from VERY LOW to VERY HIGH. NOTICE: You MUST output strictly in this format: \lbrack IMAGE SIMILARITY\rbrack = A out of 10.
    \end{itemize}
    \end{itemize}
\end{tcolorbox}
\end{minipage}

\begin{minipage}{0.99\columnwidth}
\begin{tcolorbox} 
    \raggedright
    \small
    \textbf{Chain-of-Thought for All-attributes Setting}
    \begin{itemize}[leftmargin=4.5mm]
    \setlength{\itemsep}{2pt}
    
    \item \textbf{Base Prompt:}
    \begin{itemize}[leftmargin=4.5mm]
        \setlength{\itemsep}{2pt}
        \item Help me align or match entities of different knowledge graphs according to the given names, images and prior retrieval results.
    \end{itemize}

    \begin{center}
    \begin{minipage}{0.4\linewidth}
        \centering
        \includegraphics[width=\linewidth]{fig/case2.png}
        \small ID:7125 Name:African Development Fund
    \end{minipage}
    % \hfill
    \quad
    \begin{minipage}{0.4\linewidth}
        \centering
        \includegraphics[width=\linewidth]{fig/case1.png}
        \small ID:26134 Name:African Development Bank
    \end{minipage}
    \end{center}
    
    % \item \textbf{Prior Results:}
    % \begin{itemize}[leftmargin=4.5mm]
    %     \setlength{\itemsep}{2pt}
    %     \item Below are prior retrieval results focusing on visual and textual similarity of the given images and names, respectively.  
    %     \item Candidate Entities List which may be aligned with QUERY Entity (ID:7125 Name:African Development Fund) are shown in the following list [Format: ID Name Similarity]: 
    %     \begin{itemize}[leftmargin=4.5mm]
    %         \setlength{\itemsep}{2pt} 
    %         \item 26364 African Development Bank 0.42, 
    %         \item 22619 Southern African Development Community 0.32,
    %         \item 26052 Joseph Ki-Zerbo 0.24, 
    %         \item 22946 United Nations African Union Mission in Darfur 0.15, 
    %         \item 23688 OHADA 0.07, 
    %         \item 22646 Food and Agriculture Organization 0.05, 
    %         \item 26134 Jorg Asmussen -0.02, 
    %         \item 26518 Joyce Banda -0.08.
    %     \end{itemize}
    % \end{itemize}

    \item \textbf{Prior Results:}
    \begin{itemize}[leftmargin=4.5mm]
        \setlength{\itemsep}{2pt}
        \item Below are prior retrieval results focusing on visual and textual similarity of the given images and names, respectively.  
        \item Candidate Entities List which may be aligned with QUERY Entity (ID:7125 Name:African Development Fund) are shown in the following list [Format: ID Name Similarity]:
        \begin{itemize}[leftmargin=4.5mm]
            \setlength{\itemsep}{2pt} 
            \item 26364 African Development Bank 0.42, 
            \item 22619 Southern African Development Community 0.32,
            \item 26052 Joseph Ki-Zerbo 0.24, 
            \item 22946 United Nations African Union Mission in Darfur 0.15, 
            \item 23688 OHADA 0.07, 
            \item 22646 Food and Agriculture Organization 0.05, 
            \item 26134 Jorg Asmussen -0.02, 
            \item 26518 Joyce Banda -0.08.
        \end{itemize}
    \end{itemize}

    \item \textbf{Rethinking Image Similarity:}
    \begin{itemize}[leftmargin=4.5mm]
        \setlength{\itemsep}{2pt}
        \item The two provided images represent the query (ID:7125 Name:African Development Fund) and the candidate (ID:26134 Name:African Development Bank). 
        \item Please evaluate the probability that the QUERY and the CANDIDATE belong to the same entity \textbf{STEP BY STEP}:   
        \item 1. Rethink the visual similarities based on the prior retrieval results and the given images.          
        \item 2. Analyze the similarities of detailed visual contents between the provided images.    
        \item 3. Consider the underlying connections between the given images.
        \item \lbrack Output Format\rbrack: \lbrack IMAGE SIMILARITY\rbrack = A out of 10, where A is in range \lbrack 0,1,2,3,4,5,6,7,8,9,10\rbrack, which represents the levels from VERY LOW to VERY HIGH. NOTICE: You MUST output strictly in this format: \lbrack IMAGE SIMILARITY\rbrack = A out of 10.
    \end{itemize}

    \item \textbf{Rethinking Name Similarity:}
    \begin{itemize}[leftmargin=4.5mm]
        \setlength{\itemsep}{2pt}
        \item The two provided names represent the query (ID:7125 Name:African Development Fund) and the candidate (ID:26134 Name:African Development Bank). 
        \item Based on the prior retrieval results and the given names, identify the similarities between the query entity and candidate entity.   
        \item \lbrack Output Format\rbrack: \lbrack NAME SIMILARITY\rbrack = B out of 10, where B is in range \lbrack 0,1,2,3,4,5,6,7,8,9,10\rbrack, which represents the levels from VERY LOW to VERY HIGH. NOTICE: You MUST output strictly in this format: \lbrack IMAGE SIMILARITY\rbrack = B out of 10.
    \end{itemize}
    \end{itemize}
\end{tcolorbox}
\end{minipage}

\newpage

\section{More Experiment Results}
\label{appendix: more experiments}
In this section, we provide more experimental results of the proposed RULE. Unless otherwise specified, all experiments are conducted on the ICEWS-WIKI dataset under the 50\% DNC setting.
\subsection{Results under the E-E, E-A and A-A Noisy Correspondence}
\label{appenidx: result on single NC}
In the manuscript, we have carried out experiments under DNC settings.
Here, we further provide more results under single-type NC scenarios, \textit{i.e.}, E-E, E-A, and A-A NC.
From the results in Table~\ref{tab: appendix Non-name}-\ref{tab: appendix All-attributes}, Rule significantly outperforms all baselines across various settings and datasets.

\begin{table*}[htbp]
    \centering
    \caption{Comparisons with state-of-the-art methods on Non-name benchmarks under NC setting regarding the Hits and MRR metrics.
    }
    \label{tab: appendix Non-name}
    \resizebox{1.0\linewidth}{!}{
    \begin{tabular}{c|l|ccc|ccc|ccc|ccc|ccc|c}
        \toprule
        \multirow{2}{*}{Setting} & \multirow{2}{*}{Method} & \multicolumn{3}{c|}{ICEWS-WIKI} & \multicolumn{3}{c|}{ICEWS-YAGO} & \multicolumn{3}{c|}{DBP15K$_{\text{ZH-EN}}$} & \multicolumn{3}{c|}{DBP15K$_{\text{JA-EN}}$} & \multicolumn{3}{c}{DBP15K$_{\text{FR-EN}}$} & \multirow{2}{*}{\makecell{Avg. \\ H@1}} \\
                                 &                         & H@1 & H@5 & MRR & H@1 & H@5 & MRR & H@1 & H@5 & MRR & H@1 & H@5 & MRR & H@1 & H@5 & MRR \\
        \midrule
% ====== 50% E-E NC ======
% \multirow{8}{*}{50\% E-E NC}
\multirow{8}{*}{\parbox[c]{1cm}{\centering 50\% \\ E-E NC}}
& EVA         & 4.4  & 6.9  & 5.7  & 0.1  & 0.1  & 0.2  & 17.4 & 30.6 & 23.6 & 21.9 & 34.9 & 28.1 & 18.9 & 31.1 & 24.9 & 12.5 
\\
& MCLEA       & 28.6 & 43.9 & 36.2 & 20.3 & 34.7 & 27.4 & 58.2 & 75.6 & 66.1 & 58.2 & 74.6 & 65.8 & 59.1 & 75.3 & 66.6 & 44.9 \\
& XGEA        & 40.0 & 47.4 & 44.0 & 21.8 & 26.5 & 24.7 & \underline{70.1} & \underline{85.2} & \underline{75.9} & \underline{68.9} & \underline{82.1} & \underline{74.2} & \underline{68.8} & \underline{81.9} & \underline{72.0} & 53.9 \\
& MEAformer   & \underline{46.0} & \underline{62.6} & \underline{53.6} & 33.6 & 48.3 & 40.7 & 68.9 & 83.1 & 75.3 & 65.0 & 80.5 & 72.1 & 66.8 & 81.4 & 73.5 & \underline{56.1} \\
& UMAEA       & 44.0 & 60.4 & 51.9 & 28.1 & 43.5 & 35.5 & 66.8 & 82.8 & 74.0 & 62.2 & 79.4 & 70.1 & 63.7 & 80.4 & 71.2 & 53.0 \\
& PMF         & 38.6 & 53.9 & 46.0 & 32.6 & 45.4 & 38.8 & 68.4 & 83.3 & 75.2 & 67.1 & 81.8 & 73.8 & 67.5 & 82.1 & 74.2 & 54.8 \\
& HHEA & 45.6 & 59.1 & 52.1 & \underline{37.0} & \underline{48.5} & \underline{42.6} & 47.7 & 56.2 & 52.0 & 49.3 & 56.8 & 53.3 & 52.2 & 59.4 & 56.0 & 46.4 \\
& Ours        & \textbf{61.0} & \textbf{72.5} & \textbf{66.6} & \textbf{48.7} & \textbf{59.5} & \textbf{54.0} & \textbf{73.6} & \textbf{86.1} & \textbf{79.3} & \textbf{71.6} & \textbf{84.6} & \textbf{77.6} & \textbf{71.5} & \textbf{84.7} & \textbf{77.6}  & \textbf{65.3}\\
\midrule
% ====== 50% E-A NC ======
% \multirow{8}{*}{50\% E-A NC}
\multirow{8}{*}{\parbox[c]{1cm}{\centering 50\% \\ E-A NC}}
& EVA         & 5.5  & 9.4  & 7.7  & 0.2  & 0.9  & 0.9  & 64.6 & 83.4 & 73.1 & 67.8 & 86.1 & 76.0 & 69.1 & 87.8 & 77.2  & 41.4\\
& MCLEA       & 34.4 & 51.9 & 42.9 & 24.0 & 39.1 & 31.5 & 75.3 & 89.6 & 81.7 & 75.4 & 90.4 & 82.0 & 76.2 & 90.9 & 82.7 & 57.1 \\
& XGEA        & 40.0 & 47.9 & 44.3 & 22.4 & 27.9 & 25.7 & \underline{76.1} & \underline{91.2} & \underline{82.6} & \underline{75.8} & 91.2 & 82.5 & 76.0 & 92.1 & 83.3 & 58.1 \\
& MEAformer   & \underline{53.1} & \underline{68.6} & \underline{60.4} & \underline{37.4} & \underline{52.8} & \underline{44.8} & 75.8 & 91.1 & 82.5 & \underline{75.8} & \underline{91.8} & \underline{82.9} & \underline{76.6} & \underline{92.6} & \underline{83.7} & \underline{63.7} \\
& UMAEA       & 50.7 & 67.5 & 58.5 & 34.3 & 49.0 & 41.6 & 72.3 & 89.8 & 80.1 & 72.2 & 90.2 & 80.2 & 73.8 & 91.6 & 81.7 & 60.7 \\
& PMF         & 44.8 & 59.4 & 51.9 & 34.9 & 48.5 & 41.5 & 74.3 & 89.7 & 81.1 & 74.4 & 90.4 & 81.5 & 76.0 & 92.3 & 83.2 & 60.9 \\
& HHEA & 49.9 & 63.7 & 56.4 & 35.8 & 48.9 & 42.1 & 46.4 & 60.8 & 53.6 & 48.1 & 61.2 & 54.6 & 52.4 & 64.0 & 58.3 & 46.0 \\
& Ours        & \textbf{62.6} & \textbf{75.5} & \textbf{68.7} & \textbf{47.7} & \textbf{59.1} & \textbf{53.5} & \textbf{80.3} & \textbf{92.4} & \textbf{85.7} & \textbf{79.1} & \textbf{92.5} & \textbf{84.9} & \textbf{80.1} & \textbf{93.2} & \textbf{85.9} & \textbf{70.0} \\
\midrule
% ====== 50% A-A NC ======
% \multirow{8}{*}{50\% A-A NC}
\multirow{8}{*}{\parbox[c]{1cm}{\centering 50\% \\ A-A NC}}
& EVA         & 29.3 & 41.1 & 35.1 & 5.5  & 10.3 & 8.1  & 70.2 & 86.4 & 77.5 & 72.3 & 89.1 & 79.8 & 73.6 & 90.4 & 80.9 & 50.2 \\
& MCLEA       & 43.2 & 62.8 & 52.4 & 29.7 & 47.1 & 38.3 & 74.8 & 89.5 & 81.4 & 75.4 & 90.2 & 82.0 & 76.1 & 90.7 & 82.6 & 59.8 \\
& XGEA        & 40.8 & 48.8 & 45.1 & 24.2 & 29.7 & 27.6 & 78.2 & 92.1 & 84.4 & 78.8 & 92.9 & 84.9 & 81.0 & 93.9 & 86.6 & 60.6 \\
& MEAformer   & \underline{53.4} & \underline{69.1} & \underline{60.9} & 34.7 & 51.1 & 42.6 & \underline{81.9} & \underline{93.5} & \underline{87.0} & \underline{81.4} & \underline{94.2} & \underline{87.0} & 82.1 & 94.4 & 87.5 & \underline{66.7} \\
& UMAEA       & 49.6 & 68.0 & 58.0 & 29.5 & 45.8 & 37.6 & 77.8 & 92.1 & 84.3 & 78.3 & 92.3 & 84.5 & 79.3 & 93.5 & 85.6 & 62.9 \\
& PMF         & 51.9 & 67.3 & 59.4 & \underline{37.1} & \underline{52.5} & \underline{44.8} & 80.7 & 93.1 & 86.2 & 81.1 & 93.4 & 86.6 & \underline{82.7} & \underline{94.5} & \underline{88.0} & \underline{66.7} \\
& HHEA & 48.6 & 63.3 & 55.8 & 37.0 & 50.2 & 43.4 & 46.6 & 61.1 & 53.8 & 47.3 & 59.7 & 53.5 & 50.1 & 62.4 & 56.3 & 45.9 \\
& Ours        & \textbf{63.6} & \textbf{76.7} & \textbf{69.7} & \textbf{48.6} & \textbf{60.3} & \textbf{54.4} & \textbf{85.6} & \textbf{94.7} & \textbf{89.7} & \textbf{85.1} & \textbf{95.4} & \textbf{89.5} & \textbf{85.0} & \textbf{95.5} & \textbf{89.6} & \textbf{73.6} \\
\bottomrule
\end{tabular}
}
\end{table*}

\begin{table*}[htbp]
    \centering
    \caption{Comparisons with state-of-the-art methods on All-attributes benchmarks under NC settings regarding the Hits and MRR metrics.}
    \label{tab: appendix All-attributes}
    \resizebox{1.0\linewidth}{!}{
    \begin{tabular}{c|l|ccc|ccc|ccc|ccc|ccc|c}
        \toprule
        \multirow{2}{*}{Setting} & \multirow{2}{*}{Method} & \multicolumn{3}{c|}{ICEWS-WIKI} & \multicolumn{3}{c|}{ICEWS-YAGO} & \multicolumn{3}{c|}{DBP15K$_{\text{ZH-EN}}$} & \multicolumn{3}{c|}{DBP15K$_{\text{JA-EN}}$} & \multicolumn{3}{c}{DBP15K$_{\text{FR-EN}}$} & \multirow{2}{*}{\makecell{Avg. \\ H@1}}  \\
        & & H@1 & H@5 & MRR & H@1 & H@5 & MRR & H@1 & H@5 & MRR & H@1 & H@5 & MRR & H@1 & H@5 & MRR \\
        \midrule
        \multirow{8}{*}{\parbox[c]{1cm}{\centering 50\% \\ E-E NC}}
        & EVA            & 53.6 & 61.8 & 57.7 & 4.4  & 5.7  & 5.1  & 25.6 & 38.3 & 31.6 & 32.6 & 44.3 & 38.2 & 51.2 & 61.1 & 56.0 & 33.5 \\
        & MCLEA          & 85.0 & 92.7 & 88.5 & 86.4 & 94.5 & 90.1 & 89.2 & 94.9 & 91.8 & 94.3 & 97.7 & 95.8 & 98.1 & 99.6 & 98.8 & 90.6 \\
        & XGEA           & 57.4 & 68.8 & 62.9 & 57.0 & 68.9 & 62.7 & 87.8 & 95.2 & 91.2 & 91.6 & 97.0 & 94.1 & 96.6 & 99.0 & 97.7 & 78.1 \\
        & MEAformer      & \underline{92.7} & \underline{97.1} & \underline{94.7} & \underline{91.7} & \underline{97.0} & \underline{94.1} & \underline{94.0} & \underline{97.5} & \underline{95.6} & \underline{97.6} & \underline{99.2} & \underline{98.3} & \underline{99.2} & \underline{99.8} & \underline{99.5} & \underline{95.0} \\
        & UMAEA          & 90.1 & 96.4 & 93.0 & 87.7 & 95.4 & 91.2 & 92.6 & 97.3 & 94.7 & 96.5 & 98.9 & 97.6 & 98.5 & 99.7 & 99.1 & 93.1 \\
        & PMF            & 90.7 & 96.3 & 93.2 & 88.4 & 95.3 & 91.5 & 92.7 & 96.6 & 94.5 & 96.4 & 98.6 & 97.4 & 98.6 & 99.7 & 99.1 & 93.3 \\
        & HHEA    & 88.7 & 93.9 & 91.2 & 89.7 & 94.7 & 92.1 & 67.9 & 78.4 & 72.8 & 76.7 & 85.7 & 80.8 & 87.8 & 93.2 & 90.3 & 82.2 \\
        & Ours           & \textbf{98.4} & \textbf{99.0} & \textbf{98.7} & \textbf{97.5} & \textbf{98.6} & \textbf{98.8} & \textbf{98.0} & \textbf{96.4} & \textbf{98.2} & \textbf{98.6} & \textbf{97.3} & \textbf{99.1} & \textbf{99.6} & \textbf{99.9} & \textbf{99.8} & \textbf{98.3} \\
        \midrule
        \multirow{8}{*}{\parbox[c]{1cm}{\centering 50\% \\ E-A NC}}
        & EVA            & 64.8 & 73.0 & 68.8 & 9.1  & 10.9 & 10.1 & 69.3 & 86.0 & 76.7 & 75.6 & 89.9 & 82.0 & 88.9 & 96.1 & 92.1 & 61.6 \\
        & MCLEA          & 87.3 & 93.9 & 90.2 & 85.7 & 94.5 & 89.7 & 93.5 & 97.9 & 95.5 & 96.7 & 99.4 & 97.9 & 98.9 & 99.7 & 99.3 & 92.4 \\
        & XGEA           & 51.4 & 61.6 & 56.4 & 42.3 & 53.2 & 47.7 & 86.6 & 95.6 & 90.6 & 89.5 & 96.9 & 92.8 & 98.1 & 99.7 & 98.8 & 73.6 \\
        & MEAformer      & \underline{94.6} & \underline{98.0} & \underline{96.1} & \underline{92.0} & \underline{97.2} & \underline{94.3} & \underline{95.1} & \underline{98.6} & \underline{96.6} & 97.9 & \underline{99.7} & \underline{98.7} & 99.3 & \underline{99.8} & 99.6 & \underline{95.8} \\
        & UMAEA          & 91.1 & 96.6 & 93.6 & 86.9 & 95.1 & 90.6 & 93.1 & 98.0 & 95.3 & 96.5 & 99.2 & 97.7 & 98.6 & 99.8 & 99.2 & 93.2 \\
        & PMF            & 91.1 & 95.7 & 93.2 & 89.7 & 95.8 & 92.4 & 95.0 & 98.5 & 96.5 & \underline{98.0} & \underline{99.7} & \underline{98.7} & \underline{99.4} & \underline{99.8} & \underline{99.7} & 94.6 \\
        & HHEA    & 88.8 & 95.1 & 91.7 & 89.2 & 94.8 & 91.8 & 72.0 & 84.9 & 77.9 & 79.7 & 89.4 & 84.2 & 90.0 & 95.7 & 92.5 & 83.9 \\
        & Ours           & \textbf{98.4} & \textbf{98.9} & \textbf{98.7} & \textbf{97.5} & \textbf{98.6} & \textbf{98.7} & \textbf{98.0} & \textbf{97.7} & \textbf{99.3} & \textbf{99.4} & \textbf{98.4} & \textbf{99.4} & \textbf{99.7} & \textbf{100.0} & \textbf{99.8} & \textbf{98.6} \\
        \midrule
        \multirow{8}{*}{\parbox[c]{1cm}{\centering 50\% \\ A-A NC}}
        & EVA            & 84.6 & 91.5 & 87.9 & 61.7 & 69.8 & 65.6 & 87.0 & 95.4 & 90.7 & 91.8 & 97.8 & 94.5 & 96.4 & 99.3 & 97.6 & 84.3 \\
        & MCLEA          & 92.2 & 97.2 & 94.5 & 90.9 & 97.1 & 93.7 & 92.5 & 97.9 & 94.9 & 95.9 & 99.2 & 97.3 & 97.8 & 99.6 & 98.7 & 93.9 \\
        & XGEA           & 56.3 & 67.8 & 61.8 & 87.2 & 94.6 & 90.6 & 88.3 & 96.1 & 91.8 & 91.8 & 97.2 & 94.2 & 94.5 & 98.0 & 96.0 & 83.6 \\
        & MEAformer      & \underline{95.1} & \underline{98.3} & \underline{96.6} & \underline{93.0} & \underline{97.9} & \underline{95.2} & \underline{96.5} & \underline{99.0} & \underline{97.6} & \underline{98.6} & \underline{99.8} & \underline{99.2} & \underline{99.4} & \underline{99.8} & \underline{99.8} & \underline{96.6} \\
        & UMAEA          & 92.1 & 97.3 & 94.4 & 86.7 & 95.1 & 90.5 & 95.2 & 98.9 & 96.9 & 97.9 & 99.6 & 98.7 & 99.0 & 99.7 & 99.4 & 94.2 \\
        & PMF            & 94.6 & 98.6 & 96.4 & 91.7 & 97.1 & 94.3 & 96.2 & 99.0 & 97.5 & 98.4 & 99.7 & 99.0 & \underline{99.4} & \underline{99.8} & 99.7 & 96.2 \\
        & HHEA    & 88.7 & 95.2 & 91.8 & 89.5 & 95.5 & 92.2 & 53.4 & 68.9 & 60.7 & 64.0 & 76.6 & 70.0 & 74.1 & 83.0 & 78.3 & 73.9 \\
        & Ours           & \textbf{98.4} & \textbf{99.0} & \textbf{98.7} & \textbf{97.5} & \textbf{98.7} & \textbf{98.9} & \textbf{98.1} & \textbf{98.2} & \textbf{99.5} & \textbf{99.6} & \textbf{99.9} & \textbf{99.6} & \textbf{99.8} & \textbf{100.0} & \textbf{99.9} & \textbf{98.7} \\
        \bottomrule
    \end{tabular}
    }
\end{table*}

\newpage

\subsection{Parameter Analysis}
As mentioned in the manuscript, the hyperparameters in \textsc{Rule} include the trade-off parameter $\lambda$ in Eq.~\ref{eq: overall}, the temperature $\tau$ in Eq.~\ref{eq: evi}, and the threshold $\beta$ in Eq.~\ref{eq: threshold}. 
Here, we conduct a detailed parameter analysis to study their individual impacts.  
c in Fig.~\ref{fig: parameter analysis}, one could have the following conclusions: 
i) rule exhibits stable performance when $\lambda$ is within the range of $[2\mathrm{e}{-5}, 5\mathrm{e}{-4}]$, $\tau$ within $[0.05, 0.2]$, and $\beta$ within $[0.2, 0.4]$.  
ii) when $\lambda$ is excessively large, the model would overemphasize the regularization term in Eq.~\ref{eq: kl}, resulting in performance drop.

\begin{figure}[htbp]
    \centering
    \includegraphics[width=1.0\linewidth]{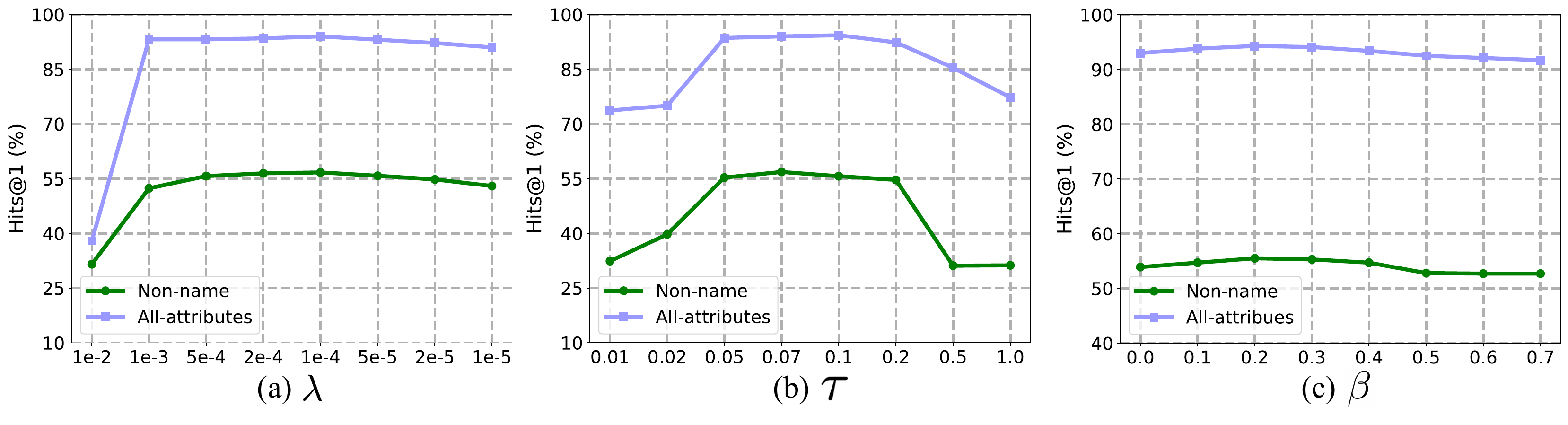}
    \caption{The parameter analysis of the trade-off parameter $\lambda$ in Eq.~\ref{eq: overall}, the temperature $\tau$ in Eq.~\ref{eq: evi}, and the threshold $\beta$ in Eq.~\ref{eq: threshold}. }
    \label{fig: parameter analysis}
\end{figure}

\subsection{More Ablation Studies}
\label{sec: More Ablation Studies}
To further verify the effectiveness of the proposed RULE, we conduct additional ablation studies of the loss terms and the test-time recitification module.
Specifically, we investigate the loss terms and the pais division module in Table~\ref{appendix: ablation study}, resulting in the following conclusions. 
First, the regularization loss penalizes the evidence of negative pairs, thereby enhancing the performance.
Second, employing tailored strategies to either $\mathcal{S}_{I}$ or $\mathcal{S}_{U}$ would contribute to the performance improvement.

\begin{table}[ht]
\centering
\caption{Ablation study of the loss terms and the correspondence division module on the ICEWS-WIKI benchmark.}
\resizebox{0.65\linewidth}{!}{
\begin{tabular}{cc|ccc|ccc}
\toprule
\multicolumn{2}{c|}{Setting} & \multicolumn{3}{c|}{Non-name} & \multicolumn{3}{c}{All-attributes} \\
$L_{DR}$ & $L_{Reg}$ & Hits@1 & Hits@5 & MRR & Hits@1 & Hits@5 & MRR \\
\midrule
\checkmark &  & 55.8 & 68.3 & 61.8 & 93.0 & 97.7 & 96.1 \\
\checkmark & \checkmark  & 56.5 & 68.6 & 62.3 & 94.0 & 97.7 & 95.7 \\
\multicolumn{2}{c|}{w/o $\mathcal{S}_{I}$} & 55.6 & 67.8 & 61.5 & 93.5 & 97.4 & 95.4 \\
\multicolumn{2}{c|}{w/o $\mathcal{S}_{U}$} & 54.8 & 67.9 & 61.1 & 93.5 & 97.2 & 95.2 \\
\bottomrule
\end{tabular}
}
\label{appendix: ablation study}
\end{table}

As discussed in the manuscript, the test-time reasoning module would help to capture the underlying connection and thus boost the performance.
Here, we carry out more ablation studies about the test-time reasoning module.
From the results in Table~\ref{tab: TTR Non-name}-\ref{tab: TTR All-attributes}, the proposed TTR module significantly improves performance on both the Non-name and All-attribute benchmarks.
It is worth noting that, despite the missing image attributes in the DBP15K benchmark, Rule still achieves stable performance gains.

\begin{table*}[htbp]
    \centering
    \caption{Ablation study of TTR module on Non-name benchmarks under different DNC settings.}
    \label{tab: TTR Non-name}
    \resizebox{1.0\linewidth}{!}{  
    \begin{tabular}{c|l|ccc|ccc|ccc|ccc|ccc}
        \toprule
        \multirow{2}{*}{Setting} & \multirow{2}{*}{Method} & \multicolumn{3}{c|}{ICEWS-WIKI} & \multicolumn{3}{c|}{ICEWS-YAGO} & \multicolumn{3}{c|}{DBP15K$_{\text{ZH-EN}}$} & \multicolumn{3}{c|}{DBP15K$_{\text{JA-EN}}$} & \multicolumn{3}{c}{DBP15K$_{\text{FR-EN}}$} \\
                                 &                         & H@1 & H@5 & MRR & H@1 & H@5 & MRR & H@1 & H@5 & MRR & H@1 & H@5 & MRR & H@1 & H@5 & MRR \\
        \midrule
        \multirow{2}{*}{\parbox[c][1cm][c]{1cm}{\centering Inherent \\ DNC}} 
        & w/o TTR & 62.1 & 76.6 & 68.8 & 46.4 & 59.6 & 53.0 & 85.5 & 94.8 & 89.7 & 85.0 & 95.3 & 89.4 & 85.2 & 95.4 & 89.7 \\
        & w TTR & 64.2 & 76.7 & 70.0 & 48.8 & 60.5 & 54.6 & 85.6 & 94.8 & 89.7 & 85.2 & 95.4 & 89.6 & 85.1 & 95.4 & 89.6 \\
        \midrule
        \multirow{2}{*}{\parbox[c]{1cm}{\centering 20\% \\ DNC}}
        & w/o TTR & 60.3 & 74.7 & 67.0 & 46.2 & 58.6 & 52.2 & 80.9 & 92.0 & 85.9 & 80.4 & 92.2 & 85.6 & 80.6 & 92.2 & 85.8 \\
        & w TTR & 62.4 & 75.1 & 68.5 & 48.3 & 59.5 & 53.9 & 81.1 & 92.0 & 86.0 & 80.5 & 92.2 & 85.6 & 80.5 & 92.2 & 85.8 \\
        \midrule
        \multirow{2}{*}{\parbox[c]{1cm}{\centering 50\% \\ DNC}}
        & w/o TTR & 56.5 & 68.6 & 62.3 & 44.5 & 56.0 & 50.3 & 73.2 & 86.0 & 79.0 & 71.5 & 84.7 & 77.6 & 71.3 & 84.7 & 77.5 \\
        & w TTR & 58.2 & 69.7 & 63.6 & 46.9 & 57.4 & 52.0 & 73.4 & 85.9 & 79.2 & 71.8 & 84.9 & 77.8 & 71.4 & 84.8 & 77.5 \\
        \bottomrule
    \end{tabular}
    }
\end{table*}

\begin{table*}[htbp]
    \centering
    \caption{Ablation study of TTR module on All-attributes benchmarks under different DNC settings.}
    \label{tab: TTR All-attributes}
    \resizebox{1.0\linewidth}{!}{  
    \begin{tabular}{c|l|ccc|ccc|ccc|ccc|ccc}
        \toprule
        \multirow{2}{*}{Setting} & \multirow{2}{*}{Method} & \multicolumn{3}{c|}{ICEWS-WIKI} & \multicolumn{3}{c|}{ICEWS-YAGO} & \multicolumn{3}{c|}{DBP15K$_{\text{ZH-EN}}$} & \multicolumn{3}{c|}{DBP15K$_{\text{JA-EN}}$} & \multicolumn{3}{c}{DBP15K$_{\text{FR-EN}}$} \\
                                 &                         & H@1 & H@5 & MRR & H@1 & H@5 & MRR & H@1 & H@5 & MRR & H@1 & H@5 & MRR & H@1 & H@5 & MRR \\
        \midrule
        \multirow{2}{*}{\parbox[c][1cm][c]{1cm}{\centering Inherent \\ DNC}} 
        & w/o TTR & 96.7 & 98.9 & 97.8 & 95.2 & 98.5 & 96.7 & 97.4 & 99.4 & 98.3 & 99.2 & 99.9 & 99.5 & 99.7 & 100.0 & 99.8 \\
        & w TTR   & 98.9 & 99.2 & 99.1 & 97.6 & 98.8 & 98.2 & 98.3 & 99.5 & 98.8 & 99.3 & 99.9 & 99.6 & 99.8 & 100.0 & 99.9 \\
        \midrule
        \multirow{2}{*}{\parbox[c]{1cm}{\centering 20\% \\ DNC}}
        & w/o TTR & 95.8 & 98.4 & 97.1 & 95.1 & 98.4 & 96.6 & 96.4 & 98.8 & 97.5 & 98.8 & 99.8 & 99.3 & 99.6 & 100.0 & 99.8 \\
        & w TTR   & 98.3 & 98.9 & 98.6 & 97.5 & 98.7 & 98.1 & 97.6 & 99.1 & 98.3 & 99.1 & 99.9 & 99.5 & 99.8 & 100.0 & 99.9 \\
        \midrule
        \multirow{2}{*}{\parbox[c]{1cm}{\centering 50\% \\ DNC}}
        & w/o TTR & 94.0 & 97.7 & 95.7 & 94.3 & 97.8 & 95.9 & 94.9 & 97.9 & 96.2 & 98.0 & 99.6 & 98.7 & 99.3 & 99.9 & 99.6 \\
        & w TTR   & 97.7 & 98.3 & 98.0 & 97.0 & 98.2 & 97.6 & 96.3 & 98.1 & 97.2 & 98.7 & 99.7 & 99.1 & 99.7 & 100.0 & 99.8 \\
        \bottomrule
    \end{tabular}
    }
\end{table*}
\subsection{More Experiments under Various DNC Ratios}
In the manuscript, we have carried out the experiments under various DNC ratios on the ICEWS-WIKI dataset.
Here, we provide more experiments under various DNC ratios on the DBP15K$_{\text{ZH-EN}}$ dataset.
As shown in Fig~\ref{fig: appenidx_ratio}, the proposed RULE significantly outperforms all the baselines.

\begin{figure}[htbp]
    \centering
    \includegraphics[width=0.6\linewidth]{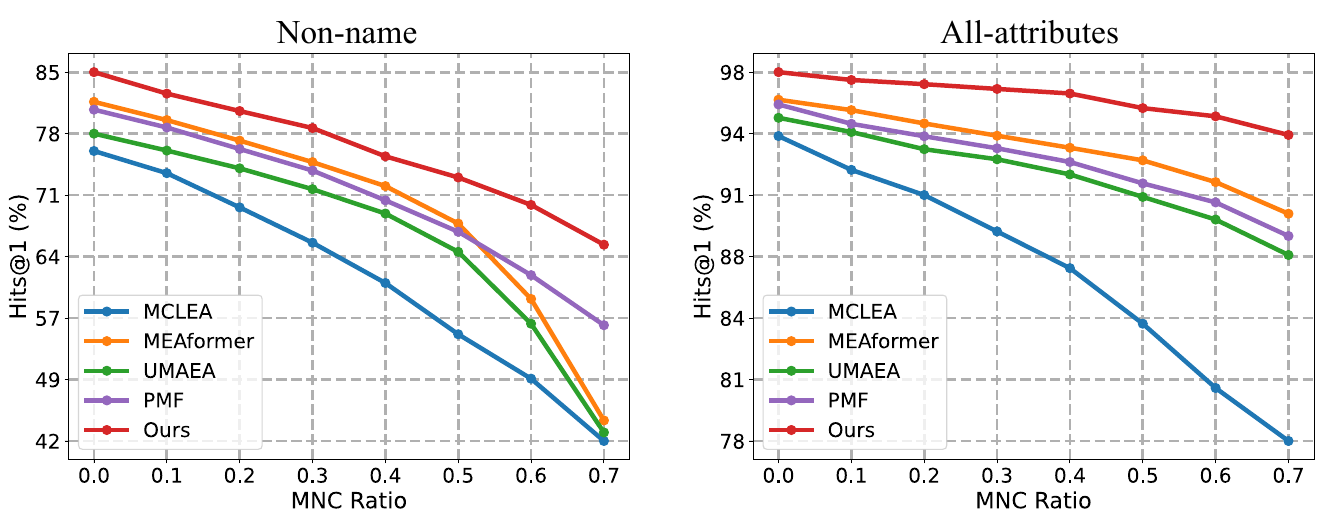}
    \caption{Performance with various DNC ratios on the  DBP15K$_{\text{ZH-EN}}$ dataset.}
    \label{fig: appenidx_ratio}
\end{figure}

\subsection{More Analytic Study of Dually Robust Fusion}
We conduct more analytic studies on the ICEWS-WIKI dataset to verify that the DRF achieves better performance by estimating the correct correspondence during the test time.
From the results in Table~\ref{tab: DRF analysis}, the estimated correspondence achieves higher accuracy compared to the vanilla fusion method, \textit{i.e.}, simple concatenation.
Consequently, Rule benefits from the estimated correspondence and the DRF module, resulting in better performance.
Note that, different from previous fusion methods~\citep{attribute_clustering,uncoupled_clustering}, the proposed DRF further explores robust fusion on the reasoning scores output by the MLLMs.

\begin{table}[htbp]
\centering
\caption{The analytic study of the estimated correspondence on the ICEWS-WIKI dataset.}
\label{tab: DRF analysis}
\resizebox{0.6\linewidth}{!}{
\begin{tabular}{l|ccc|ccc}
\toprule
\multirow{2}{*}{Setting}  & \multicolumn{3}{c|}{Non-name} & \multicolumn{3}{c}{All-attributes} \\
& Hits@1 & Hits@5 & MRR & Hits@1 & Hits@5 & MRR \\
\midrule
w/o DRF & 51.4 & 66.7 & 58.6 & 93.0 & 97.3 & 94.9 \\
Estimated & 55.3 & 65.0 & 60.2 & 94.1 & 97.4 & 95.7 \\
w DRF & 56.5 & 68.6 & 62.3 & 94.0 & 97.7 & 95.7 \\
\bottomrule
\end{tabular}
}
\end{table}

\newpage

\subsection{More Analytic Study of Inter-graph Learning}
As discussed in~\ref{sec: Ablation and Analytic Study} and \ref{sec: More Ablation Studies}, the proposed RULE improves the performance by adopting the tailored strategies to the divided subsets, \textit{i.e.}, $\mathcal{S}_{U}$, $\mathcal{S}_{I}$ and $\mathcal{S}_{C}$.
Here, we carry out more analytic studies about the statistics of the three kinds of pairs.
From the results in Fig.~\ref{fig: appedix_three_pairs} and Table~\ref{tab: statistics of three kinds}, one could observe that the proposed correspondence division module effectively distinguishes the three subsets and thus boosts the performance.
Different from previous discrepancy elimination methods~\citep{multi_modal_deviation,cross_modal_attack1,cross_modal_attack2}, the proposed DRL not only divides correspondences into positive and negative, but also further distinguishes different types of negatives and adapts the tailored strategy to mitigate the negative impacts of them.

\begin{figure}[htbp]
    \centering
    \includegraphics[width=0.7\linewidth]{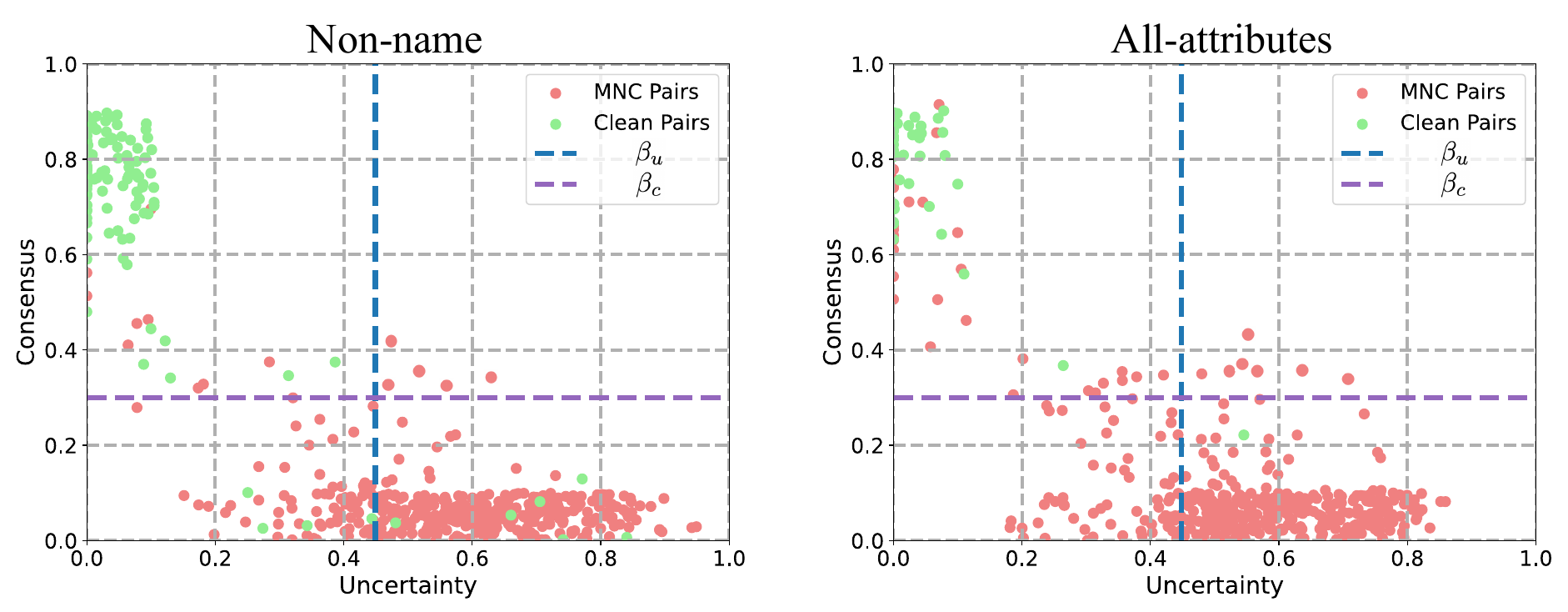}
    \caption{Quantitative analysis of the uncertainty and consensus on the integrated entity.}
    \label{fig: appedix_three_pairs}
\end{figure}

\begin{table}[htbp]
\centering
\caption{Statistics of three kinds of pairs.}
\label{tab: statistics of three kinds}
\resizebox{0.4\linewidth}{!}{
\begin{tabular}{l|ccc}
\toprule
Entity & $\mathcal{S}_{C}$ & $\mathcal{S}_{I}$ & $\mathcal{S}_{U}$ \\
\midrule
Non-name & 14.8\% & 33.9\% & 51.3\% \\
All-attributes & 14.1\% & 37.0\% & 48.9\% \\
\bottomrule
\end{tabular}
}
\end{table}

\subsection{Results of Various MLLM}
To verify the generalizability of the TTR module, we conduct additional experiments on the MLLM with various architectures and parameter scales.
Specifically, we evaluate the performance of the TTR module on Qwen2.5-VL models with 3B, 7B, and 72B parameters, as well as on LLaVA-1.6~\citep{LLAVA} with 34B parameters.
As shown in Table~\ref{tab: appendix various MLLM}, Rule achieves consistent performance improvements across various model architectures and parameter scales, demonstrating the effectiveness of the proposed TTR module.
\begin{table}[htbp]
\centering
\caption{Performance comparison of different MLLMs on Non-name and All-attributes settings.}
\label{tab: appendix various MLLM}
\resizebox{0.7\linewidth}{!}{
\begin{tabular}{l|ccc|ccc}
\toprule
\multirow{2}{*}{Methods} & \multicolumn{3}{c|}{Non-name} & \multicolumn{3}{c}{All-attributes} \\
& Hits@1 & Hits@5 & MRR & Hits@1 & Hits@5 & MRR \\
\midrule
w/o TTR & 56.5 & 68.6 & 62.3 & 94.0 & 97.7 & 95.7 \\
Qwen2.5-VL 3B & 57.2 & 68.8 & 62.7 & 96.5 & 98.2 & 97.3 \\
Qwen2.5-VL 7B & 57.4 & 69.0 & 62.9 & 97.1 & 98.2 & 97.6 \\
Qwen2.5-VL 72B & 58.2 & 69.7 & 63.6 & 97.7 & 98.3 & 98.0 \\
LLaVA-1.6 34B & 57.0 & 68.8 & 62.6 & 95.5 & 97.9 & 96.6 \\
\bottomrule
\end{tabular}
}
\end{table}
\subsection{Complexity Analysis of Test-time Correspondence Reasoning}
\label{appendix: efficiency}
To assess the efficiency of the TTR module, we report the time cost and memory cost across various parameter scales of Qwen2.5-VL. 
Note that all experiments were conducted on the NVIDIA RTX 3090 GPUs.
From the results in Table~\ref{tab: complexity_analysis}, one can observe that employing Qwen2.5-VL 3B or 7B for TTR leads to a considerable performance boost, with up to a 5× speedup. Furthermore, when resources are limited, RULE can be deployed without MLLMs. Even without TTR, RULE still significantly outperforms the strongest baselines, achieving 56.5\% vs. 43.9\% (best-performing baseline) in the ``Non-name'' setting and 94.0\% vs. 91.9\% (best-performing baseline) in the ``All-attributes'' setting. Moreover, the lightweight solutions substantially reduce the memory cost as well.
\begin{table}[htbp]
\centering
\caption{Experiment results for the complexity analysis on the ICEWS-WIKI dataset.}
\label{tab: complexity_analysis}
\resizebox{0.75\linewidth}{!}{
\begin{tabular}{l|cc|cc|c}
\toprule
\multirow{2}{*}{Methods} & \multicolumn{2}{c|}{Non-name} & \multicolumn{2}{c|}{All-attributes} & \multirow{2}{*}{Memory Consumption} \\
 & H@1 & Time Cost & H@1 & Time Cost &  \\
\midrule
w/o TTR & 56.5 & 103   & 94.0 & 109  & 1GPU $\times$ $\sim$16GB \\
Qwen2.5-VL 3B  & 57.2 & 2122  & 96.5 & 1008 & 1GPU $\times$ $\sim$8GB \\
Qwen2.5-VL 7B  & 57.4 & 2690  & 97.1 & 1329 & 1GPU $\times$ $\sim$8GB \\
Qwen2.5-VL 72B & 58.2 & 10043 & 97.7 & 4373 & 8GPU $\times$ $\sim$20GB \\
\bottomrule
\end{tabular}
}
\end{table}

% \begin{table}[htbp]
% \centering
% \caption{Time cost (in seconds) of the TTR module on Qwen2.5-VL with varying parameter scales. ``Base'' indicates the training time of RULE without the TTR module.}
% \label{tab: ttr-efficiency}
% \resizebox{0.5\linewidth}{!}{
% \begin{tabular}{l|cc}
% \toprule
% Methods & Non-name & All-attributes \\
% \midrule
% w/o TTR & 103 & 109 \\
% Qwen2.5-VL 3B & 2122 & 1008 \\
% Qwen2.5-VL 7B & 2690 & 1329 \\
% Qwen2.5-VL 72B & 10443 & 4373 \\
% \bottomrule
% \end{tabular}
% }
% \end{table}
\subsection{Results of Cross-entropy Implementation}
As discussed in~\citep{Evidentiallearning}, the uncertainty-based loss could be implemented using either the Mean Squared Error (MSE) loss or the Cross-Entropy (CE) loss. In Appendix~\ref{appendix: evidential learning}, we derive the uncertainty-based formulation based on MSE loss, which also serves as the foundation of the proposed dually robust loss in Eq.~\ref{eq: dually robust loss}. 
Here, we derive the dually robust loss using the cross-entropy formulation as follows,
\begin{equation}
    \begin{aligned}
    \mathcal{L}_{DR}(\boldsymbol{\alpha}_{i},\boldsymbol{\hat{y}}_{i})&= \mathbb{I}\left(u_{i}\leq\beta_{u}\right)\int \left[\sum_{j=1}^{\tilde{N}}-\hat{y}_{ij}\log (p_{ij})\right]  \ D(\boldsymbol{p}_{i} \mid \boldsymbol{\alpha}_{i}) \ d \mathbf{p}_i \\
    &=\mathbb{I}\left(u_{i}\leq\beta_{u}\right)\sum_{j=1}^{\tilde{N}} \hat{y}_{ij}\left(\psi(Q_{i})-\psi(\alpha_{ij})\right),
    \end{aligned}
    \label{eq: ce evidential learning}
\end{equation}
where $\psi(\cdot)$ is the digamma function.
To further investigate the effectiveness of the proposed DRL, we conduct additional experiments of the cross-entropy implementation based on Eq.~\ref{eq: ce evidential learning}.
From the results in Table~\ref{tab: mse_ce}, both implementations based on MSE and CE losses achieve competitive performance compared to other baselines shown in Tables~\ref{tab: main table visual} and~\ref{tab: main table textual}.
\begin{table}[htbp]
\centering
\caption{Comparisons of RULE with MAE and CE objectives under the DNC setting.}
\label{tab: mse_ce}
\resizebox{0.75\linewidth}{!}{
\begin{tabular}{ll|ccc|ccc}
\toprule
\multirow{2}{*}{Objective} & \multirow{2}{*}{Setting} & \multicolumn{3}{c|}{Non-name} & \multicolumn{3}{c}{All-attributes} \\
 & & Hits@1 & Hits@5 & MRR & Hits@1 & Hits@5 & MRR \\
\midrule
\multirow{3}{*}{MAE} 
& Inherent DNC & 62.1 & 76.6 & 68.8 & 96.7 & 98.9 & 97.8 \\
& 20\% DNC   & 60.3 & 74.7 & 67.0 & 95.8 & 98.4 & 97.1 \\
& 50\% DNC   & 56.5 & 68.6 & 62.3 & 94.0 & 97.7 & 95.7 \\
\midrule
\multirow{3}{*}{CE} 
& Inherent DNC & 61.7 & 76.2 & 68.6 & 96.3 & 98.8 & 97.5 \\
& 20\% DNC   & 59.6 & 74.0 & 66.3 & 95.9 & 98.6 & 97.2 \\
& 50\% DNC   & 54.4 & 68.5 & 61.1 & 92.5 & 97.0 & 94.5 \\
\bottomrule
\end{tabular}
}
\end{table}

\newpage

\subsection{More Analytic Study of the Reliability}
\label{appendix: ablation of reliability}
Here, we explore alternative strategies that dynamically manipulate the weights assigned to uncertainty $u_i$ and consensus $c_I$ in Eq.~\ref{eq: reliablity_weights}, \textit{i.e.},
\begin{equation}
    w_i=\left(1-u_i\right) \alpha+c_i (1-\alpha)
\end{equation}
where $\alpha$ is the balance parameter. From the results in Table~\ref{tab: weighting_strategy}, one could have the following conclusions: i) employ either uncertainty ($\alpha=1$) or consensus ($\alpha=0$) is inadequate in identifying DNC, leading to degraded performance; 
ii) thanks to the normalization in Eq.~\ref{eq: evi} and Eq.~\ref{eq: cons}, a simple linear combination of uncertainty and consensus could yield considerable performance improvements. 
The results above indicate that our design for combining uncertainty and consensus is reasonable and effective.

\begin{table}[htbp]
\centering
\caption{Analytic Experiments of the weighting strategy on the ICEWS-WIKI dataset under the ``Non-name'' setting.}
\label{tab: weighting_strategy}
\resizebox{0.3\linewidth}{!}{
\begin{tabular}{c|ccc}
\toprule
$\alpha$ & H@1 & H@5 & MRR \\
\midrule
0   & 51.8 & 62.4 & 57.0 \\
0.2 & 53.4 & 64.4 & 58.7 \\
0.4 & 54.9 & 66.5 & 60.4 \\
0.5 & \textbf{56.5} & \textbf{68.6} & \textbf{62.3} \\
0.6 & 55.1 & 67.1 & 60.8 \\
0.8 & 53.9 & 66.6 & 60.0 \\
1.0 & 52.5 & 65.7 & 58.8 \\
\bottomrule
\end{tabular}
}
\end{table}

\subsection{More Experiments on Various Backbones}
\label{appendix: various_backbones}
The proposed RULE is a model-agnostic method, which could be employed in any mainstream vision-language backbones.
To verify this, we further verify the effectiveness of RULE in various backbones.
Specifically, we adopt SigLIP~\citep{siglip} and BLIP~\citep{blip} as the backbones for extracting image and text features. 
Notably, we do not employ MLLM-based reasoning in the experiments for fairness and select the best-performing baselines from Tables~\ref{tab: main table visual}-\ref{tab: main table textual} for comparisons.
The results in Table~\ref{tab: various_backbone} demonstrate that RULE outperforms all baselines across various backbones, which confirms the generality and effectiveness of RULE.
\begin{table}[htbp]
\centering
\caption{Experiment results on the ICEWS-WIKI dataset with SigLIP and BLIP as backbone.}
\label{tab: various_backbone}
\resizebox{0.8\linewidth}{!}{
\begin{tabular}{l|l|cc|cc}
\toprule
\multirow{2}{*}{Setting} & \multirow{2}{*}{Methods} & \multicolumn{2}{c|}{SigLIP} & \multicolumn{2}{c}{BLIP} \\
 & & Non-name & All-attributes & Non-name & All-attributes \\
\midrule
\multirow{4}{*}{Inherent DNC} 
& MEAformer & 45.3 & 93.8 & 33.4 & 95.1 \\
& UMAEA     & 41.5 & 92.0 & 31.5 & 93.6 \\
& PMF       & 45.2 & 92.3 & 30.8 & 95.3 \\
& Ours      & \textbf{55.2} & \textbf{95.2} & \textbf{39.1} & \textbf{96.8} \\
\midrule
\multirow{4}{*}{20\% DNC} 
& MEAformer & 42.2 & 89.0 & 28.9 & 95.0 \\
& UMAEA     & 39.4 & 89.3 & 26.8 & 91.3 \\
& PMF       & 37.8 & 89.0 & 26.6 & 89.7 \\
& Ours      & \textbf{52.3} & \textbf{94.5} & \textbf{34.9} & \textbf{96.7} \\
\midrule
\multirow{4}{*}{50\% DNC} 
& MEAformer & 32.2 & 88.5 & 19.6 & 93.2 \\
& UMAEA     & 27.4 & 85.5 & 15.3 & 89.5 \\
& PMF       & 28.1 & 83.9 & 16.8 & 80.9 \\
& Ours      & \textbf{45.4} & \textbf{93.2} & \textbf{24.4} & \textbf{97.1} \\
\bottomrule
\end{tabular}
}
\end{table}

\newpage

\subsection{More Experiments on FB15K-DB15K and FB15K-YAGO15K Benchmarks}
Here, we conduct additional experiments on the FB15K-DB15K and FB15K-YAGO15 benchmarks with their inherent handcrafted features. 
Notably, the raw data for these two datasets is no longer available, so we don't employ MLLM for reasoning at test-time.
Here, we select the best-performing baselines from Tables~\ref{tab: main table visual}-\ref{tab: main table textual} for comparisons.
The results in Table~\ref{tab: fb15k_db15k}-\ref{tab: fb15k_yago15k} demonstrate that RULE outperforms all baselines on FB15K-DB15K and FB15K-YAGO15K datasets under the settings with various DNC ratios.

\begin{table}[htbp]
\centering
\caption{Experiment results on the FB15K-DB15K dataset under the ``Non-name'' setting. ``Inherent DNC'' refers to the setting without any additional injected noise.}
\label{tab: fb15k_db15k}
\resizebox{0.5\linewidth}{!}{
\begin{tabular}{l|l|ccc}
\toprule
Setting & Methods & H@1 & H@5 & MRR \\
\midrule
\multirow{4}{*}{Inherent DNC} 
% & MCLEA     & 36.4 & 57.7 & 46.4 \\
& MEAformer & 40.8 & 62.7 & 51.0 \\
& UMAEA     & 40.8 & 62.0 & 50.7 \\
& PMF       & 43.0 & 65.3 & 53.2 \\
& Ours      & \textbf{44.4} & \textbf{63.6} & \textbf{53.8} \\
\midrule
\multirow{4}{*}{20\% DNC} 
% & MCLEA     & 23.3 & 45.1 & 35.1 \\
& MEAformer & 28.1 & 49.7 & 39.6 \\
& UMAEA     & 29.5 & 51.4 & 41.1 \\
& PMF       & 29.6 & 51.4 & 41.2 \\
& Ours      & \textbf{36.0} & \textbf{56.0} & \textbf{45.9} \\
\midrule
\multirow{4}{*}{50\% DNC} 
% & MCLEA     & 11.7 & 27.5 & 21.1 \\
& MEAformer & 12.0 & 26.6 & 20.7 \\
& UMAEA     & 15.0 & 32.1 & 24.7 \\
& PMF       & 15.1 & 31.7 & 24.6 \\
& Ours      & \textbf{20.3} & \textbf{36.9} & \textbf{28.9} \\
\bottomrule
\end{tabular}
}
\end{table}

\begin{table}[htbp]
\centering
\caption{Experiment results on the FB15K-YAGO15K dataset under the ``Non-name'' setting.}
\label{tab: fb15k_yago15k}
\resizebox{0.5\linewidth}{!}{
\begin{tabular}{l|l|ccc}
\toprule
Setting & Methods & H@1 & H@5 & MRR \\
\midrule
\multirow{4}{*}{Inherent DNC} 
% & MCLEA     & 27.4 & 47.8 & 37.3 \\
& MEAformer & 31.8 & 50.5 & 40.7 \\
& UMAEA     & 31.2 & 50.9 & 40.3 \\
& PMF       & 34.6 & 55.0 & 44.3 \\
& Ours      & \textbf{38.9} & \textbf{55.7} & \textbf{47.0} \\
\midrule
\multirow{4}{*}{20\% DNC} 
% & MCLEA     & 17.3 & 33.2 & 26.0 \\
& MEAformer & 20.5 & 38.1 & 30.1 \\
& UMAEA     & 22.6 & 40.0 & 31.8 \\
& PMF       & 23.7 & 42.0 & 33.2 \\
& Ours      & \textbf{31.7} & \textbf{49.0} & \textbf{40.4} \\
\midrule
\multirow{4}{*}{50\% DNC} 
% & MCLEA     & 9.6  & 21.2 & 16.2 \\
& MEAformer & 10.3 & 20.7 & 16.2 \\
& UMAEA     & 11.9 & 24.0 & 18.6 \\
& PMF       & 11.3 & 22.8 & 17.9 \\
& Ours      & \textbf{17.9} & \textbf{31.9} & \textbf{25.1} \\
\bottomrule
\end{tabular}
}
\end{table}

\newpage

\subsection{Discussions with Active Learning Paradigm}
One intuitive approach to address DNC is to introduce additional expert annotations and the active learning paradigm~\citep{active} might be the most representative paradigm. 
However, we argue that active learning is inadequate for addressing DNC for the following reasons:
\begin{itemize}
    \item Existing MMEA-oriented active learning methods~\citep{pmcea} focus on annotating E-E correspondences, which have not explored the establishment of E-A and A-A associations. As a result, even with expert involvement, active learning is unable to solve E-A or A-A NC. It is important to emphasize that these two types of NC are highly practical in real-world scenarios, with their total proportion exceeding 40\% in commonly-used ICEWS benchmarks. 
    \item Active learning requires expert involvement, which is time-consuming and labor-intensive. In contrast, RULE provides a robust learning paradigm in an automated manner. Moreover, even after careful manual annotation, over 50\% of the data suffer from DNC challenge in real-world datasets, which indicates that expert annotation remains prone to errors.
    \item Active learning selects the most valuable samples for human annotation to improve model performance. However, most active learning methods require an initial set of labeled data for value model training before conducting active learning. As a result, the DNC challenge is still inevitable in the initial set, thus degrading the performance of the value model and then undermining the active learning.
\end{itemize}

\newpage

\section{Case Study of Dual-level Noisy Correspondence}
In this section, we showcase some DNC examples from the ICEWS benchmark in Fig.~\ref{fig: mnc_case}.
\begin{figure}[htbp]
    \centering
    \includegraphics[width=0.9\linewidth]{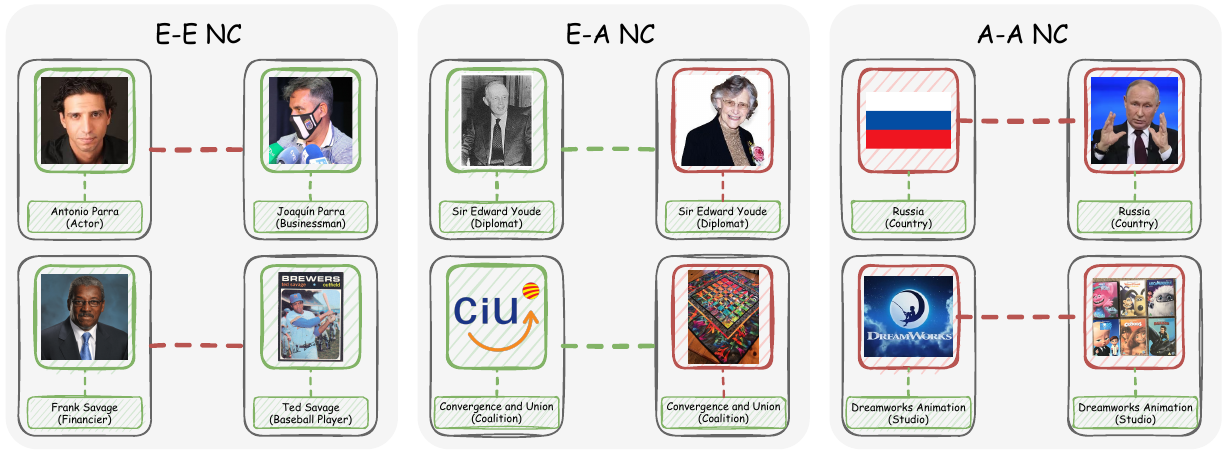}
    \caption{Examples of DNC in the ICEWS benchmark.}
    \label{fig: mnc_case}
\end{figure}

\section{Case Study of Test-time Correspondence Reasoning}
\label{appenidx: ttr_cases}
In this section, we visualize the test-time reasoning process in TTR module. 
As illustrated in Fig.~\ref{fig: ttr_name}-\ref{fig: ttr_image}, the TTR module uncovers the underlying connections between image and name attribute pairs, thus boosting performance.

\begin{figure}[htbp]
    \centering
    \includegraphics[width=0.9\linewidth]{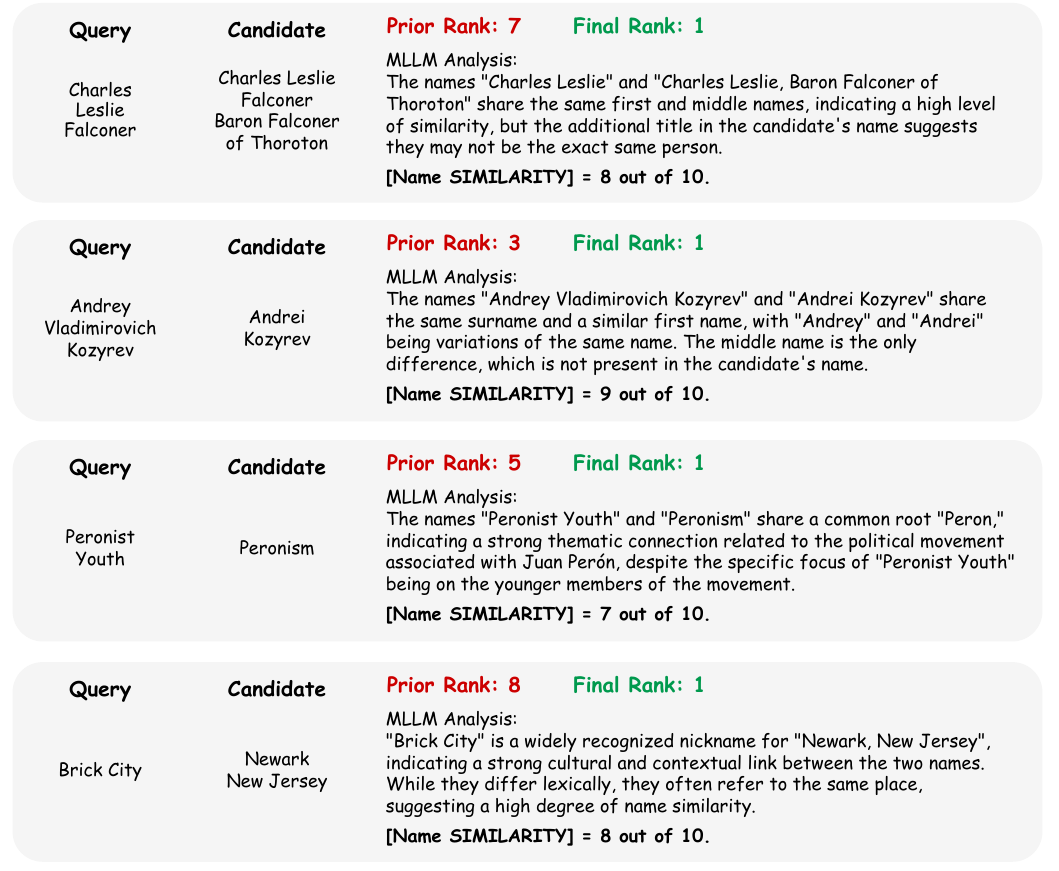}
    \caption{Reasoning process for the name attributes in test-time reasoning.}
    \label{fig: ttr_name}
\end{figure}

\begin{figure}[htbp]
    \centering
    \includegraphics[width=0.9\linewidth]{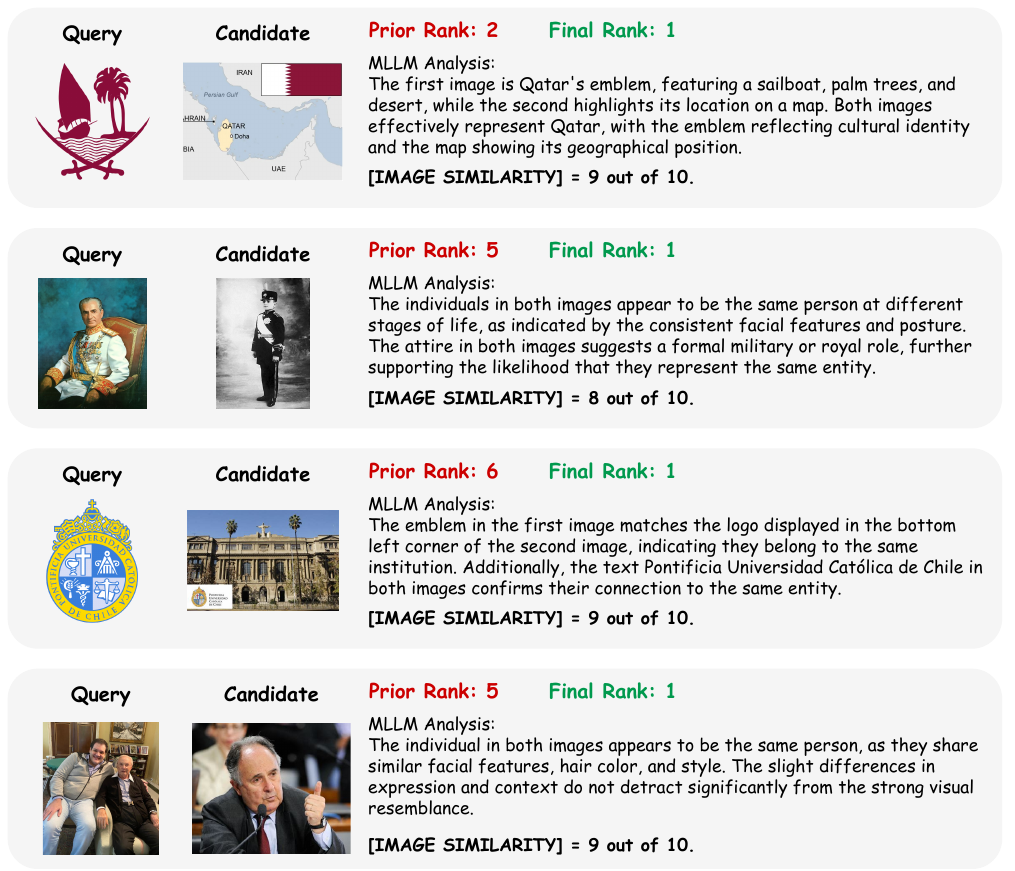}
    \caption{Reasoning process for the image attributes in test-time reasoning.}
    \label{fig: ttr_image}
\end{figure}

\end{document}